
\documentclass[10pt]{article} 

\usepackage{amssymb}            
\usepackage{mathtools}          
\usepackage{mathrsfs}           
\mathtoolsset{showonlyrefs}     
\usepackage{graphicx}           
\usepackage{subcaption}         
\usepackage[space]{grffile}     
\usepackage{url}                

\usepackage[round]{natbib}

\usepackage[bookmarks=false,colorlinks=true,linkcolor=blue,urlcolor=magenta,citecolor=red,pdfstartview=FitH,pagebackref]{hyperref}
\usepackage{url}

\usepackage{graphicx} 
\usepackage{amsthm}
\usepackage{amsmath}
\usepackage{amsfonts,amssymb}
\usepackage{mathtools}
\usepackage{xcolor}
\usepackage{booktabs}
\usepackage{makecell}
\usepackage{enumitem}
\usepackage{tikz}
\usepackage{algpseudocode}
\usepackage{multirow}
\usepackage{comment}
\usepackage{algorithm}

\newcommand{\norm}[1]{\left\lVert#1\right\rVert}

\newcommand{\PID}{g}
\newcommand{\Lp}{L_\PID^\pi}
\newcommand{\Ls}{L_\PID^*}
\newcommand{\Ap}{A_\PID^\pi}

\newcommand{\Bp}{b_\PID^\pi}

\newcommand{\BRstar}{\text{BR}^{*}}
\newcommand{\BRpi}{\text{BR}^{\pi}}
\newcommand{\BRhat}{\widehat{\text{BR}}}

\newcommand{\OO}{{\mathcal{O}}}
\newcommand{\DD}{{\mathcal{D}}}

\newcommand{\fullQ}{\tilde{Q}}
\newcommand{\fullV}{\tilde{V}}

\newcommand{\XX}{{\mathcal{X}}}
\newcommand{\GG}{{\mathcal{G}}}
\newcommand{\FSet}{{ \{\texttt{v, v', z}\} }}
\renewcommand{\AA}{{\mathcal{A}}}
\def\reals{\mathbb{R}} 
\def\defeq{\triangleq} 

\newcommand{\inner}[1]{\left\langle #1 \right\rangle}

\newtheorem{theorem}{Theorem}
\newtheorem{lemma}{Lemma}

\newtheorem{assumption}{Assumption}
\newtheorem{proposition}{Proposition}
\newtheorem{definition}{Definition}


\usepackage{amsmath,amsfonts,bm}


















\def\1{\bm{1}}

\def\eps{{\epsilon}}










\DeclareMathAlphabet{\mathsfit}{\encodingdefault}{\sfdefault}{m}{sl}
\SetMathAlphabet{\mathsfit}{bold}{\encodingdefault}{\sfdefault}{bx}{n}











\newcommand{\E}{\mathbb{E}}

\newcommand{\Var}{\mathrm{Var}}



\DeclareMathOperator*{\argmax}{arg\,max}

\usepackage[accepted]{rlc}

\pagenumbering{arabic}

\newcommand{\Vpi}{V^\pi}
\newcommand{\Vtd}{V^\mathrm{TD}}
\newcommand{\Vpid}{V^\mathrm{PID}}
\newcommand{\ctd}{c^\mathrm{TD}}
\newcommand{\cpid}{c^\mathrm{PID}}
\newcommand{\deltatd}{\delta^\mathrm{TD}}
\newcommand{\deltapid}{\delta^\mathrm{PID}}
\newcommand{\Ctd}{C^\mathrm{TD}}
\newcommand{\Cpid}{C^\mathrm{PID}}
\newcommand{\Btd}{B^\mathrm{TD}}
\newcommand{\Bpid}{B^\mathrm{PID}}
\newcommand{\ltd}{{l^\mathrm{TD}}}
\newcommand{\lpid}{{l^\mathrm{PID}}}

\newcommand{\Estattd}{{E_\mathrm{stat}^\mathrm{TD}}}
\newcommand{\Estatpid}{{E_\mathrm{stat}^\mathrm{PID}}}

\newcommand{\Eopttd}{{E_\mathrm{opt}^\mathrm{TD}}}
\newcommand{\Eoptpid}{{E_\mathrm{opt}^\mathrm{PID}}}

\newcommand{\Qopt}{Q^*}

\newcommand{\fullVpi}{\tilde{V}^\pi}
\newcommand{\fullQopt}{\tilde{Q}^*}
\newcommand{\snorm}[1]{\norm{#1}_{2,S}}

\newcommand{\Tpi}{T^\pi}
\newcommand{\Topt}{{T^\star}}

\newcommand{\piopt}{{\pi^*}}

\newcommand{\PKernel}{\mathcal{P}}

\newcommand{\PKernelpi}{{\PKernel}^{\pi}}

\newcommand{\RKernel}{\mathcal{R}}

\newcommand{\rpi}{r^\pi}

\newcommand{\MM}{\mathcal{M}}

\newcommand{\Real}{\mathbb R}

\newcommand{\XA}{\XX\times\AA}

\newcommand{\BB}{\mathcal{B}}

\newcommand{\tf}{\texttt{f}}
\newcommand{\tv}{\texttt{v}}
\newcommand{\tz}{\texttt{z}}
\newcommand{\tvp}{\texttt{v'}}

\title{PID Accelerated Temporal Difference Algorithms}

\author{Mark Bedaywi\thanks{These authors contributed equally to this work.} \quad Amin Rakhsha\footnotemark[1] \quad Amir-massoud Farahmand \\
Department of Computer Science, University of Toronto
\\
Vector Institute}

\begin{document}

\maketitle

\begin{abstract}
	Long-horizon tasks, which have a large discount factor, pose a challenge for most conventional reinforcement learning (RL) algorithms. Algorithms such as Value Iteration and Temporal Difference (TD) learning have a slow convergence rate and become inefficient in these tasks. When the transition distributions are given, PID~VI was recently introduced to accelerate the convergence of Value Iteration using ideas from control theory. Inspired by this, we introduce PID TD Learning and PID Q-Learning algorithms for the RL setting, in which only samples from the environment are available. 
 We give a theoretical analysis of the convergence of PID TD Learning and its acceleration compared to the conventional TD Learning. 
 We also introduce a method for adapting PID gains in the presence of noise and empirically verify its effectiveness.
\end{abstract}

\section{Introduction}
\label{sec:introduction}

The Value Iteration (VI) algorithm is one of the primary dynamic programming methods for solving (discounted) Markov Decision Processes (MDP). It is the foundation of many Reinforcement Learning (RL) algorithms such as the Temporal Difference (TD) Learning~\citep{Sutton1988,TsitsiklisVanRoy97}, Q-Learning~\citep{Watkins1989}, Approximate/Fitted Value Iteration~\citep{Gordon1995,Ernst05,Munos08JMLR,TosatooPirottaDEramoRestelli2017}, and DQN~\citep{mnih2015human,VanHasseltGuezSilver2016}, which can all be seen as sample-based variants of VI.
A weakness of the VI algorithm and the RL algorithms built on top of it is their slow convergence in problems with discount factor $\gamma$ close to $1$, which corresponds to the long-horizon problems where the agent aims to maximize its cumulative rewards far in the future. 
One can show that the error of the value function calculated by VI at iteration $k$ goes to zero with the slow rate of $\OO(\gamma^k)$. The slow convergence rate when $\gamma \approx 1$ also appears in the error analysis of the downstream temporal difference~\citep{Szepesvari1997,EvenDarMansour2003,Wainwright2019} and fitted value iteration algorithms~\citep{Munos08JMLR,FarahmandMunosSzepesvari10,ChenJiang2019,FanWangXieYang2019}.
If $\gamma \approx 1$, these algorithms become very slow and inefficient.
This work introduces accelerated temporal difference learning algorithms that can mitigate this issue.

\citet{farahmand2021pid} recently suggested that one may view the iterates of VI as a dynamical system. This opens up the possibility of using tools from control theory to modify, and perhaps accelerate, the VI's dynamics.
They specifically used the simple class of Proportional-Integral-Derivative (PID) controllers to modify VI, resulting in a new procedure called the PID VI algorithm.
They showed that with a careful choice of the controller gains, PID VI can converge significantly faster than the conventional VI. They also introduced a gain adaptation mechanism, a meta-learning procedure, to automatically choose these gains.

PID VI, similar to VI, is a dynamic programming  algorithm and requires access to the full transition dynamics of the environment. In the RL setting, however, the transition dynamics is not directly accessible to the agent; the agent can only acquire samples from the transition dynamics by interacting with the environment. 


In this work, we show how the ideas of the PID VI algorithm can be used in the RL setting. Our contributions are:
\begin{itemize}
    \item Introduce PID TD Learning and PID Q-Learning algorithms (Section~\ref{sec:pidtd-pidql}) for the RL setting that show accelerated convergence compared to their conventional counterparts.
    \item Theoretically show the convergence and acceleration of the PID TD Learning (Section~\ref{sec:pid-theory}).
    \item A sample-based gain adaptation mechanism to automatically tune the controller gains, reducing the hyperparameter-tuning required for the algorithms (Section~\ref{sec:gain-adaptation}).
\end{itemize}

The new algorithms are a step towards RL algorithms that can tackle long-horizon tasks more efficiently.

\section{Background}
\label{sec:background}

Given a set $\Omega$, let $\MM(\Omega)$ be the set of probability distributions over $\Omega$, and  $\BB(\Omega)$ be the set of bounded functions over $\Omega$.
We consider a discounted MDP~\citep{bertsekas1996neuro,SzepesvariBook10,sutton2018reinforcement} defined as $(\XX , \AA , \PKernel, \RKernel, \gamma)$ where $\XX$ is the finite set of $n$ states, $\AA$ is the finite set of $m$ actions, $\PKernel\colon \XX \times \AA \to \MM(\XX)$ is the transition kernel, $\RKernel: \XX \times \AA \to \MM([0,1])$ is the reward function, and $\gamma \in [0, 1)$ is the discount factor. 

A policy $\pi$ is a function $\pi\colon \mathcal{X} \to \MM(\mathcal{A})$ representing the distribution over the actions an agent would take from each state. Given a policy $\pi$, the functions $\Vpi: \XX \to \reals$ and $Q^\pi \colon \XX \times \AA \to \reals$ are the corresponding \mbox{(state-)value} and action-value functions defined as the expected discounted return when following $\pi$ starting at a certain state or state-action pair. We also let $\PKernelpi \colon \XX \to \MM(\XX)$  and $\RKernel^\pi \colon \XX \to \MM([0,1])$ be the associated transition and reward kernels of policy $\pi$, and $\rpi : \XX \to [0,1] $ be the expected reward of following $\pi$ at any state.

The Policy Evaluation (PE) problem is the problem of finding the value function $\Vpi$ corresponding to a given policy $\pi$ and the Control problem is the problem of finding the policy $\piopt$ that maximizes the corresponding value function $Q^*(x,a) \triangleq Q^{\pi^*}(x,a) = \max_\pi Q^\pi(x,a)$, for each state $x$ and action $a$. 
We shall use $V$ whenever we talk about the PE problem and $Q$ for the Control problem, for the brevity of the presentation.

The Bellman operator, $\Tpi$, and the Bellman optimality operator, $\Topt $, are defined as follows:
\begin{align*}
    &(\Tpi V)(x) \triangleq r^\pi(x) + \gamma\int \mathcal{P}^\pi(\text{d}y\mid x)V(y), &&(\forall x \in \XX),\\
    &(\Topt Q)(x, a) \triangleq r(x, a) + \gamma\int \mathcal{P}(\text{d}y\mid x, a)\max_{a' \in \mathcal{A}}Q(y, a') && (\forall x \in \XX, a \in \AA).
\end{align*}
The Bellman residual operators are defined as $\BRpi V \triangleq \Tpi  V - V$ (for PE) and $\BRstar Q \triangleq \Topt Q - Q$ (for Control). 
The value function $\Vpi$ is the unique function with $\BRpi \Vpi = 0$ and $\Qopt$ is the unique function with $\BRstar \Qopt = 0$.

The iteration $V_{k+1} \gets \Tpi V_k$ converges to $\Vpi$, and the iteration $Q_{k+1} \gets \Topt Q_k$ converges to $\Qopt$. This is known as the VI algorithm.
The convergence is due to the $\gamma$-contraction of the Bellman operators with respect to the supremum norm, and can be proven using  the Banach fixed-point theorem. The result also shows that the convergence rate of VI is $\mathcal{O}(\gamma^k)$. This can be extremely slow for long horizon tasks with $\gamma$ very close to 1.

\subsection{PID Value Iteration}

The PID VI algorithm~\citep{farahmand2021pid} is designed to address the slow convergence of VI. The key observation is that the VI algorithm can be interpreted as a feedback control system with the Bellman residual as the error signal. The conventional VI corresponds to a Proportional controller, perhaps the simplest form of controller. The PID VI algorithm uses a more general PID controller~\citep{DorfBishop2008,Ogata2010} in the feedback loop instead.

A PID controller consists of three components (terms), which together determine the update of the value function from $V_k$ to $V_{k+1}$, or from $Q_k$ to $Q_{k+1}$.
The \textit{P component} is a rescaling of the Bellman residual itself, that is, $\BRpi V_k$ or $\BRstar Q_k$. The \textit{D component} is the discrete derivative of the value updates, that is, $V_k - V_{k - 1}$ or $Q_k - Q_{k-1}$. The \textit{I component} is a running average of the Bellman residuals.
The contribution of each of these terms to the value update is determined by controller gains $\kappa_p, \kappa_I, \kappa_d \in \mathbb{R}$.

To find the I component, we maintain a running average (hence, the name integration) of Bellman residual error by $z_k \colon \XX \to \reals$ for PE and $z_k \colon \XX \times \AA \to \reals$ in the Control case,
\begin{equation}
\label{eq:PIDVI-z-update}
    z_{k + 1} = \beta z_k + \alpha \BRpi V_k \quad \mathrm{(PE)} \qquad,\qquad     z_{k + 1} = \beta z_k + \alpha \BRstar Q_k \quad \mathrm{(Control)},
\end{equation}
with $\alpha , \beta \in \reals$ and $z_1$ initialized to a vector of all zeroes. PID VI  updates the value function by
\begin{align}
\label{eq:PIDVI-V-update}
    V_{k + 1} &= V_k + \kappa_p\BRpi V_k + \kappa_I(\beta z_k + \alpha \BRpi V_k) + \kappa_d(V_k - V_{k - 1}) \quad \mathrm{(PE)},\\
     Q_{k + 1} &= Q_k + \kappa_p\BRstar Q_k + \kappa_I(\beta z_k + \alpha \BRstar Q_k) + \kappa_d(Q_k - Q_{k - 1}) \quad \mathrm{(Control)}.
\end{align}
This is a generalization of the conventional VI algorithm: VI corresponds to the choice of $(\kappa_p, \kappa_I, \kappa_d) = (1, 0, 0)$.
PID VI has the same fixed point as the conventional VI, for both PE and Control. The dynamics of the sequence $(V_k)$, however, depends on the controller gains $(\kappa_p, \kappa_I, \kappa_d)$ and $(\alpha, \beta)$ of the integrator. For some choices of the gains, the dynamics converges to the fixed point, the true value function, at an accelerated rate. We also note that the dynamics is not necessarily stable, and for some gains, it might diverge.

The choice of the gains that (maximally) accelerates convergence depends on the MDP and the policy being evaluated. One approach is to place assumptions on the structure of the MDP, and analytically derive the gains that optimize the convergence rate. \cite{farahmand2021pid} provide such a result for PE in the class of reversible Markov chains.
Assuming structure on the MDP is not desirable though, so the same work also proposes a \emph{gain adaptation} algorithm that automatically tunes the controller gains during the fixed point iteration.

The proposed gain adaptation algorithm performs gradient descent at each iteration in the direction that minimizes the squared Bellman residual. For added efficiency, it is normalized by the previous Bellman residual. Formally, we pick a meta-learning rate $\eta \in \mathbb{R}$ and for each gain $\kappa_\cdot \in \{\kappa_p, \kappa_I, \kappa_d\}$, after each iteration of PID VI, we perform
\begin{align}
    \label{eq:GA-update}
    \kappa_\cdot \gets \kappa_\cdot - \eta \frac{2}{\norm{\BRpi V_k}_2^2} \frac{\partial \frac{1}{2}\norm{\BRpi V_{k + 1}}_2^2}{\partial \kappa_\cdot} = \kappa_\cdot - \eta  \frac{1}{\norm{\BRpi V_k}_2^2} \cdot \inner{\BRpi V_{k + 1},  \frac{\partial \BRpi V_{k + 1}}{\partial \kappa_\cdot}}.
\end{align}
The Control case is described similarly by substituting $\BRpi$ with $\BRstar$. 
The PID VI algorithm \eqref{eq:PIDVI-z-update}--\eqref{eq:PIDVI-V-update} and its gain adaptation procedure~\eqref{eq:GA-update} depend on the computation of the Bellman residual $\BRpi V$ or its gradient $\partial \BRpi V/\partial \kappa_\cdot$, both of which require accessing the transition dynamics $\PKernel$. PID VI, like VI, is a dynamic programming/planning algorithm after all. They are not directly applicable to the RL setting, where the agent has access only to samples from the environment that are obtained online.
The goal of the next few sections is to develop RL variants of these algorithms.

\section{PID TD Learning and PID Q-Learning}
\label{sec:pidtd-pidql}
We introduce the PID TD Learning as well as the PID Q-Learning algorithms.
These are stochastic approximation versions of the PID VI algorithm and use samples in the form of $(X_t, A_t, R_t, X'_t)$ with $A_t \sim \pi(\cdot | X_t)$ (for PE), $X_t' \sim \PKernel(\cdot|X_t,A_t)$ and $R_t \sim \RKernel(\cdot|X_t, A_t)$, instead of directly accessing $\PKernel$ and $\RKernel$.
In a typical RL setting, they form a sequence with $X_{t+1} = X'_t$.

To generalize the PID VI procedure to the sample-based setting, we first describe each iteration of PID VI with an operator. This viewpoint allows translation of PID VI to a sample-based algorithm through stochastic approximation. The same translation applied to the Bellman operator, which is the update rule for VI, yields the conventional TD Learning and Q-Learning. PID VI for PE updates three functions $V, V', z: \XX \rightarrow \Real$ at each iteration. Here, $V$ stores the value function, $V'$ stores the previous value function, and $z$ stores the running average of the Bellman errors.  For the PID Q-Learning, the domain of these functions would be $\XA$, that is, we have $Q, Q', z: \XA \rightarrow \Real$. 
We shall use $\fullV \colon \XX \times \FSet \to \reals$ and $\fullQ \colon \XX \times \AA \times \FSet \to \reals$ as a compact representation of these three functions. Here, $\FSet$ is set of size 3 that indexes the three functions contained in $\fullV$. Note that since we focus on finite MDPs, these functions can be represented by finite-dimensional vectors $\fullV \in \reals^{3n}$ and $\fullQ \in \reals^{3nm}$:
\begin{equation*}
    \fullV = \begin{bmatrix}
        V \\ z \\ V'
    \end{bmatrix} 
    \qquad,\qquad
    \fullQ = \begin{bmatrix}
        Q \\ z \\ Q'
    \end{bmatrix}. 
\end{equation*}
Also define $\fullVpi \triangleq [\Vpi \; 0 \; \Vpi ]^\top$ and $\fullQopt \triangleq [\Qopt \; 0 \; \Qopt]^\top$.
Define the space of all possible choices of gains $(\kappa_p, \kappa_I, \kappa_d, \alpha, \beta)$ to be $\GG \triangleq \reals^5$.
For a policy $\pi$ and the controller gains $g \in \GG$, we denote the PID VI operator on the space of $\BB(\XX \times \FSet)$ by $\Lp$ defined as
\begin{equation}
\label{eq:Lp-def}
    \Lp \fullV \defeq \fullV \mapsto \begin{bmatrix}
    V + \kappa_p\cdot \BRpi V 
    + \kappa_I(\beta z + \alpha \cdot \BRpi V) 
    + \kappa_d (V - V') \\[0.3em]
    \beta z + \alpha \cdot \BRpi V \\[0.3em]
    V
    \end{bmatrix}.
\end{equation}
The operator $\Ls$ on $\BB(\XA \times \FSet)$ is defined analogously, replacing $\BRpi$ with $\BRstar$. With these notations, the PID VI algorithm can be written as
\begin{align}
 \fullV_{k+1} \leftarrow \Lp \fullV_{k} \quad \text{(PE)} \qquad, \qquad \fullQ_{k+1} \leftarrow \Ls \fullQ_{k} \qquad \text{(Control)}.
\end{align}

Now we use stochastic approximation and this operator to derive our sample-based algorithms. At each iteration, the agent receives a sample $(X_t, A_t, R_t, X'_t)$ from the environment. Focus on PE and let $\fullV_t$ be the compact form of functions $V_t, z_t, V_t'$ at iteration $t$. To perform the stochastic approximation update on the value of $\fullV_t(X_t, \texttt{f})$ for some $\texttt f \in \FSet$, we need an unbiased estimator $\hat L_{t, \tf}$ of $(\Lp \fullV_t)(X_t, \texttt f)$, which is a scalar random variable. Let $N_t(x)$ and $N_t(x,a)$ be the number of times state $x$ and state-action $x,a$ are visited by time $t$. We consider the learning rate schedule $\mu \colon \mathbb{Z} \to \reals^+$ that maps the state count to the current learning rate. With the estimator $\hat L_{t, \tf}$, and state-count dependent learning rate $\mu(N_t(X_t))$, the update given by stochastic approximation is of the form
\begin{equation}   
\label{eq:general-TD-update}
\fullV_{t+1}(X_t, \texttt{f}) \leftarrow \fullV_{t}(X_t, \texttt{f}) + \mu(N_t(X_t)) (\hat L_{t, \tf} - \fullV_t(X_t, \texttt{f}) ).
\end{equation}

Note that all values of $\fullV_{t+1}$ that are not assigned an updated value will remain the same.

The only term in $(\Lp \fullV_t)(X_t, \texttt f)$ that requires estimation is $(\BRpi V_t)(X_t)$, which depends on the transition distributions of the MDP that is not available. We can form an unbiased estimate $\BRhat_t $ of $(\BRpi V_t)(X_t) = (\Tpi V)(X_t) - V(X_t)$ or $(\BRstar Q_t)(X_t, A_t) = (\Topt Q_t)(X_t, A_t) - Q_t(X_t, A_t)$ by
\begin{align}
\label{eq:br-estimate}
    & \BRhat_t =
	\begin{cases}  
	    	R_t + \gamma V_t(X_t')  - V_t(X_t) & \text{(PE)}, \\
    		R_t + \gamma \max_{a' \in \mathcal{A}} Q_t(X_t', a') - Q_t(X_t, A_t) & \text{(Control)}.
	\end{cases}
\end{align}
A stochastic approximation procedure can then be used to update the values $V_t(X_t)$, $z_t(X_t)$, $V'_t(X_t)$ according to \eqref{eq:general-TD-update}.
The procedure would be
\begin{align}
\label{eq:pid-td}
\nonumber
     V_{t+1}(X_t) &\gets V_t(X_t) + \mu(N_t(X_t)) \big[ \kappa_p \BRhat_t 
    + \kappa_I(\beta z_t(X_t) + \alpha \BRhat_t)  + \kappa_d (V_t(X_t) - V'_t(X_t)) \big], \\
\nonumber
     z_{t+1}(X_t)& \gets z_t(X_t) + \mu(N_t(X_t)) \big[ \beta z_t(X_t) + \alpha \BRhat_t - z_t(X_t) \big], \\
    V'_{t+1}(X_t) &\gets V'_t(X_t) + \mu(N_t(X_t)) \big[ V_t(X_t) - V'_t(X_t)\big].
\end{align}
We call this procedure the PID TD learning algorithm.
For Control, we similarly form estimates $\hat L_{t, \tf}$ of $(\Ls Q_t)(X_t, A_t, \tf)$ and obtain
\begin{align}
\label{eq:pid-QL}
\nonumber
    Q_{t+1}(X_t,A_t) &\gets Q_t(X_t,A_t) +  \mu_t \big[ \kappa_p \BRhat_t + 
    \kappa_I(\beta z_t(X_t,A_t) + \alpha \BRhat_t)  + \kappa_d (Q_t(X_t,A_t) - Q'_t(X_t,A_t)) \big], \\
\nonumber    
    %
     z_{t+1}(X_t,A_t) &\gets z_t(X_t,A_t) + \mu_t \big[ \beta z_t(X_t,A_t) + \alpha \BRhat_t - z_t(X_t,A_t) \big], \\
     Q'_{t+1}(X_t,A_t) &\gets Q'_t(X_t,A_t) + \mu_t \big[ Q_t(X_t,A_t) - Q'_t(X_t,A_t)\big].
\end{align}
where $\mu_t = \mu(N_t(X_t, A_t))$. This is the PID Q-Learning algorithm. It is worth mentioning that one can use other forms of learning rates to achieve better practical results. For example, we can choose constant or state-count independent learning rates, or use three different learning rates for the three updates in \eqref{eq:pid-td} and \eqref{eq:pid-QL}.  The formulation in this section is chosen for simplicity and the theoretical analysis.

\section{Theoretical Guarantees}
\label{sec:pid-theory}
In this section, we focus on the PE problem and present the theoretical analysis of PID TD Learning. We show that with proper choices of controller gains that make PID VI convergent, PID TD Learning is also convergent. Then, under synchronous update setting, we provide insights on the accelerated convergence of PID TD Learning compared to the conventional TD Learning.

\subsection{Convergence Guarantee}
\label{sec:pid-theory-convergence}
\citet{farahmand2021pid} show that PID VI converges under a wide range of gains for a wide range of environments both analytically and experimentally. We show that this convergence carries over to our sample-based PID TD Learning. We first need to define some notations to express our result. Note that $\Lp$ is an affine linear operator.
Define $\Ap$ to be its linear component and $\Bp$ to be the constant component, so that $\Lp \fullV = \Ap \fullV + \Bp$. In particular,
\begin{equation*}
    \Ap := \begin{bmatrix}
            (1 - \kappa_p + \kappa_d - \kappa_I\alpha)I+ \gamma(\kappa_p + \kappa_I\alpha)\mathcal{P}^\pi & \beta\kappa_I I & -\kappa_dI \\
            (-\alpha I + \gamma\alpha \mathcal{P}^\pi) & \beta I & 0 \\
            I & 0 & 0
        \end{bmatrix}.
\end{equation*}
The matrix $\Ap$ plays a critical role in the behavior of PID VI as well as PID TD Learning. \citet{farahmand2021pid} show that PID VI is convergent for PE if $\rho(\Ap) < 1$ where $\rho(M)$ for a square matrix $M$ is its spectral radius, the maximum of the magnitude of the eigenvalues. It turns out the condition on the controller gains needed for the convergence of PID TD Learning is weaker than the one for PID VI. We provide the following result.
\begin{theorem}[Convergence of PID TD]
\label{thm:conv-PID-TD}
Consider a set of controller gains $g$. Let $\{\lambda_i\}$ be the eigenvalues of $\Ap$.
If $\mathrm{Re}\{\lambda_i\} < 1$ for all $i$, under mild assumptions on learning rate schedule $\mu$ and the sequence $(X_t)$ (Assumptions~\ref{ass:lr_schedule},~\ref{ass:balanced_visit}), the functions $V_t$ in PID TD Learning \eqref{eq:pid-td} converge to the value function $\Vpi$ of the policy $\pi$, almost surely.
\end{theorem}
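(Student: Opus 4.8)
The plan is to recognize PID TD Learning as an asynchronous stochastic approximation algorithm of the form $\theta_{t+1} = \theta_t + \mu_t(\hat H(\theta_t) - \theta_t + \text{noise})$ whose associated mean ODE is $\dot{\fullV} = \Lp \fullV - \fullV = (\Ap - I)\fullV + \Bp$, and to invoke the classical convergence theory for such schemes (e.g.\ the results of Tsitsiklis, Jaakkola--Jordan--Singh, or Borkar--Meyn, adapted to the asynchronous case with state-count-dependent learning rates). The key point is that $\Lp$ is an affine operator $\Ap \fullV + \Bp$, so the algorithm is genuinely \emph{linear} stochastic approximation once we write the update in stacked form over all $(x,\tf)$ coordinates; the $\BRhat_t$ estimator in \eqref{eq:br-estimate} is conditionally unbiased for $(\BRpi V_t)(X_t)$ given the history and the current visited state, and has bounded conditional variance (bounded in terms of $\|\fullV_t\|$), which is exactly the martingale-difference-plus-bounded-second-moment structure these theorems require.

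First I would make precise that the noise term $w_t \defeq \hat L_{t,\tf} - (\Lp\fullV_t)(X_t,\tf)$ is a martingale difference sequence w.r.t.\ the natural filtration, with $\E[\|w_t\|^2 \mid \mathcal{F}_t] \le C_1 + C_2\|\fullV_t\|^2$; this follows because $R_t \in [0,1]$, $X_t' \sim \PKernelpi(\cdot\mid X_t)$, and $\fullV \mapsto \BRhat$ is affine in $\fullV_t$ with bounded coefficients. Second, I would invoke the asynchronous stochastic-approximation convergence theorem under Assumptions~\ref{ass:lr_schedule} and~\ref{ass:balanced_visit} (the learning-rate conditions $\sum \mu = \infty$, $\sum \mu^2 < \infty$ applied per-state, plus the balanced-visitation condition ensuring every coordinate is updated comparably often), which reduces the problem to: (i) showing the iterates are almost surely bounded, and (ii) showing the ODE $\dot{\fullV} = (\Ap - I)\fullV + \Bp$ has a unique globally asymptotically stable equilibrium. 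Third, for (ii): the equilibrium is the fixed point of $\Lp$, which is $\fullVpi = [\Vpi\ 0\ \Vpi]^\top$ (this matches the fact noted in the excerpt that PID VI shares VI's fixed point), and global asymptotic stability of a linear ODE $\dot{y} = My$ is equivalent to every eigenvalue of $M = \Ap - I$ having strictly negative real part, i.e.\ $\mathrm{Re}\{\lambda_i - 1\} < 0$, which is precisely the hypothesis $\mathrm{Re}\{\lambda_i\} < 1$. This is exactly why the condition here is weaker than the $\rho(\Ap) < 1$ needed for PID VI: discrete-time VI needs eigenvalues inside the unit disk, but the continuous-time ODE from stochastic approximation only needs them in the open left half-plane shifted appropriately.

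For the boundedness requirement (i), I would use the scaled-ODE / Borkar--Meyn criterion: consider the scaled system $\dot{y} = (\Ap - I)y$ (dropping the constant $\Bp$), and note its only equilibrium is the origin, which is globally asymptotically stable under the same eigenvalue hypothesis; this guarantees a.s.\ boundedness of the iterates and then stability transfers to the full ODE. Alternatively, if the paper's assumptions are tailored to a Tsitsiklis-style argument, one shows directly that the affine map has a contraction-like property in a suitable weighted norm once shifted into the ODE regime.

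The main obstacle I anticipate is the \textbf{asynchronicity combined with the coupling between coordinates}: unlike ordinary TD/Q-learning where the Bellman operator is a $\gamma$-contraction in $\|\cdot\|_\infty$ and each coordinate update is monotone-stable on its own, here $\Lp$ is \emph{not} a sup-norm contraction (that is the whole point — acceleration comes from leaving the contraction regime), so the classical component-wise monotone-contraction arguments (Jaakkola--Jordan--Singh, Tsitsiklis 1994) do not apply directly. One must instead lean on the ODE method with asynchronous updates (Borkar's Chapter 7, or Abounadi--Bertsekas--Borkar), and carefully verify that the balanced-visitation assumption (Assumption~\ref{ass:balanced_visit}) yields the required relative-update-frequency condition so that the asynchronous iteration still tracks the mean ODE. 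Handling the per-state learning rate $\mu(N_t(X_t))$ rather than a global clock is a second technical wrinkle that the asynchronous theory accommodates but that requires care in matching the timescales across coordinates.
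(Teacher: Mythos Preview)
Your proposal is correct and follows essentially the same route as the paper: the paper also casts PID TD Learning as asynchronous stochastic approximation, verifies the martingale-difference noise bound $\E[\tilde W^2]\le C_1 + C_2\|\fullV\|_\infty^2$ via a dedicated lemma, and then invokes the Borkar--Meyn ODE theorem (specifically \citet[Theorem~2.5]{borkar2000ode}), using the scaled ODE $\dot u = (\Ap - I)u$ for boundedness and the full ODE $\dot u = (\Ap - I)u + \Bp$ for convergence to $\fullVpi$, with stability following exactly from $\mathrm{Re}\{\lambda_i\}<1$. Your discussion of the asynchronicity obstacle and the failure of sup-norm contraction arguments is accurate and more explicit than the paper's own treatment, but the underlying machinery is the same.
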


The proof of Theorem~\ref{thm:conv-PID-TD} uses the ordinary differential equations (ODE) method for convergence of stochastic approximation algorithms \citep{borkar2000ode,borkar2009stochastic}. The method binds the behavior of the stochastic approximation to a limiting ODE. In our case, the ODE is 
$$\dot{u}(t) = \Lp u(t) - u(t) = (\Ap - I) u(t) + \Bp.$$
 It is shown that if this ODE converges to the stationary point  $\fullVpi$, PID TD Learning will also converge. The condition for the convergence of this linear ODE is that the eigenvalues $\{\lambda'_i\}$ of $\Ap - I$ should have negative real parts. Since $\lambda_i' = \lambda_i -1$, we get the condition in Theorem~\ref{thm:conv-PID-TD}. Note that this condition is weaker than $\rho(\Ap) < 1$ for PID VI \citep{farahmand2021pid}, which is equivalent to $|\lambda_i| < 1$. In other words, PID TD Learning may be convergent even if PID VI with the same controller gains $g$ is not.

 Obtaining similar results for PID Q-Learning is technically much more  challenging. The reason for this difficulty is the fact that, just like PID VI for Control, as the agent's policy changes, the dynamics of PID Q-Learning changes. 
 Similar to \citet{farahmand2021pid}, we leave theoretical analysis of PID Q-Learning to future work and only focus on its empirical study.

\subsection{Acceleration Result}
\label{sec:pid-theory-acceleration}
 In this section, we provide theoretical insights on how PID TD Learning can show a faster convergence compared to the conventional TD Learning. Our analysis relies on 
the finite-sample analysis of stochastic approximation methods.
Since results for the asynchronous updates are limited, we provide our acceleration results for synchronous updates. Specifically, we provide our analysis for the case that at each iteration $t$, a dataset $\mathcal{D}_t = \{(x, A_{x, t}, R_{x, t}, X'_{x, t})\}_{x \in \XX}$ is given, where for each state $x \in \XX$ it contains the random action $A_{x, t} \sim \pi(\cdot|x)$, reward $R_{x, t} \sim \RKernel(x, A_{x, t})$, and $X'_{x, t} \sim \PKernelpi(\cdot | x, A_{x, t})$. Then, all values of $V, V'$, and $z$ are updated simultaneously in the same manner as \eqref{eq:pid-td}. Similarly, synchronous TD Learning
applies the conventional update on all states using the dataset. Based on the analysis by \citet{chen2020finite}, the following theorem provides bounds on the error of both algorithms for the learning rate schedule $\mu(t) = \epsilon / (t + T)$. We focus on the choices of $\epsilon, T$ that achieve the optimal asymptotic rate.

\begin{theorem}
	\label{thm:acceleration-PID-TD}
	Suppose synchronous TD Learning and synchronous PID TD Learning are run with initial value function $V_0$ and learning rate $\mu(t) = \epsilon/(t+T)$ to evaluate policy $\pi$. Let $\Vtd_t$ and $\Vpid_t$ be the value functions obtained by the algorithms at iteration $t$, and $\{\ctd_i, \cpid_i\}$ be constants only dependent on the MDP and controller gains. Assume $\PKernelpi$ is diagonalizable. If $\epsilon > 2/(1-\gamma)$ and $T \ge \ctd_1 \epsilon / (1-\gamma)$, we have
 \begin{align*}
\E\left[\norm{\Vtd_t - V^\pi}_\infty^2\right] 
        &\le \ctd_2 \norm{V_0 - V^\pi}_\infty^2\left(\frac{T}{t + T}\right)^{\epsilon(1 - \gamma)} + \frac{  \epsilon (\ctd_3 + \ctd_4 \norm{V^\pi}_\infty^2)}{ \epsilon(1 - \gamma) - 1}  \left(\frac{\epsilon}{t + T}\right).
 \end{align*}
 Moreover, assume we initialize $V' = V_0$ and $z = 0$ in PID TD Learning and $\Ap$ is diagonalizable with spectral radius $\rho < 1$. If $\epsilon > 2/(1-\rho)$ and $T \ge \cpid_1 \epsilon / (1-\rho)$, we have
 \begin{align*}
\E\left[\norm{\Vpid_t - V^\pi}_\infty^2\right] 
        &\le \cpid_2 \norm{V_0 - V^\pi}_\infty^2\left(\frac{T}{t + T}\right)^{\epsilon(1 - \rho)} + \frac{ \epsilon (\cpid_3  + \cpid_4  \norm{V^\pi}_\infty^2)}{ \epsilon(1 - \rho) - 1}  \left(\frac{\epsilon}{t + T}\right).
 \end{align*}
\end{theorem}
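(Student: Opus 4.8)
The plan is to reduce both bounds to a single application of the finite-sample analysis of \citet{chen2020finite} for linear stochastic approximation with a diminishing step-size $\mu(t) = \epsilon/(t+T)$, applied to two different linear systems: the ordinary TD system and the lifted PID TD system in the variable $\fullV$. First I would recall the generic form of that result: for an iteration $w_{t+1} = w_t + \mu(t)(\bar A w_t + \bar b + \text{noise})$ whose mean-field matrix $\bar A$ is Hurwitz-stable in the appropriate sense, and whose martingale-difference noise has bounded conditional second moment of the form $\sigma_1^2 + \sigma_2^2\|w_t - w_*\|^2$, the mean-square error decays like $C\|w_0-w_*\|^2 (T/(t+T))^{\text{(rate)}} + (\text{variance term})\cdot \epsilon/(t+T)$, provided $\epsilon$ exceeds a threshold governed by the contraction modulus and $T$ is chosen large enough relative to $\epsilon$. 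The key point is that the exponent in the first (bias) term is $\epsilon \cdot (\text{contraction rate})$, so identifying the right contraction rate in each case is what produces $\epsilon(1-\gamma)$ versus $\epsilon(1-\rho)$.

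For the TD case this is essentially the classical instantiation: synchronous TD is $V_{t+1} = V_t + \mu(t)(\gamma\PKernelpi V_t + \rpi - V_t + \text{noise})$, the mean-field operator $\Tpi - I$ is a contraction in $\|\cdot\|_\infty$ with modulus $1-\gamma$, the reward-and-transition noise has second moment bounded by $\ctd_3 + \ctd_4\|V_t\|_\infty^2$ which one rewrites around $V^\pi$ to the $\sigma_1^2 + \sigma_2^2\|V_t - V^\pi\|^2$ form, and the diagonalizability of $\PKernelpi$ lets us pass between the $\ell_2$-type bound that \citet{chen2020finite} naturally give and the $\infty$-norm statement (absorbing the change-of-basis condition number into $\ctd_2$). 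Then $\{\ctd_i\}$ are read off as the constants appearing in the threshold $\epsilon > 2/(1-\gamma)$, the requirement $T \ge \ctd_1\epsilon/(1-\gamma)$, and the numerator of the variance term. For the PID case, I would apply the same theorem to the lifted iteration $\fullV_{t+1}(X_t,\cdot) \gets \cdots$ of \eqref{eq:pid-td}, whose mean-field matrix is exactly $\Ap - I$ from Section~\ref{sec:pid-theory-convergence}; diagonalizability of $\Ap$ with $\rho(\Ap) = \rho < 1$ gives a contraction in the $\Ap$-eigenbasis norm with modulus $1-\rho$, the noise is still the single scalar $\BRhat_t$ estimation error (so the same $\sigma_1^2 + \sigma_2^2\|\cdot\|^2$ structure holds, now for $\fullV$), and the initialization $V' = V_0$, $z = 0$ ensures $\|\fullV_0 - \fullVpi\|$ is controlled by $\|V_0 - V^\pi\|_\infty$ up to a constant. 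Projecting the lifted error back onto its first block and converting norms via the eigenbasis of $\Ap$ (again absorbing the condition number into $\cpid_2$) yields the stated bound on $\|\Vpid_t - V^\pi\|_\infty^2$.

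The main obstacle I expect is the norm bookkeeping in the PID case: \citet{chen2020finite}-style results are cleanest in a norm in which the mean-field matrix is a genuine contraction, but $\Ap - I$ need not be contractive in $\|\cdot\|_\infty$ or $\|\cdot\|_2$ — only in the (complex) eigenbasis-induced norm, and only because $\rho < 1$ and $\Ap$ is diagonalizable. So I must (i) set up a weighted/rotated norm $\|\cdot\|_{\Ap}$ in which $\|e^{(\Ap-I)s}\|$ decays at rate $1-\rho$, (ii) verify the noise second-moment bound and the $\|w_t - w_*\|^2$-linearity survive the change of norm (they do, since all norms on $\reals^{3n}$ are equivalent and the noise is affine in $\fullV_t$), and (iii) carefully track how the equivalence constants between $\|\cdot\|_{\Ap}$ and $\|\cdot\|_\infty$ — which depend on the eigenvector matrix of $\Ap$ — enter $\cpid_1,\dots,\cpid_4$. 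A secondary subtlety is making sure the threshold conditions ($\epsilon > 2/(1-\rho)$, $T \ge \cpid_1\epsilon/(1-\rho)$) are exactly what the cited theorem demands once the contraction modulus is taken to be $1-\rho$ rather than $1-\gamma$; this is a matter of matching constants rather than new ideas. Everything else — rewriting $\BRhat_t$'s variance around the fixed point, the diagonalizability-enabled norm conversion, collecting MDP- and gain-dependent quantities into the unspecified constants $\{\ctd_i,\cpid_i\}$ — is routine.
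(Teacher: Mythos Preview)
Your proposal is correct and follows essentially the same route as the paper: both bounds are obtained by applying Corollary~2.1.2 of \citet{chen2020finite} after constructing a norm $\|\cdot\|_{2,S}$ (via diagonalizability of $\PKernelpi$, respectively $\Ap$) in which $\Tpi$, respectively $\Lp$, is a genuine $\gamma$- or $\rho$-contraction, bounding the noise second moment as in Lemma~\ref{lem:PID-noise-linear-boundedness}, and converting back to $\|\cdot\|_\infty$ through the norm-equivalence constants. One terminological slip to fix: it is $\Tpi$ (resp.\ $\Lp$) that is the contraction with modulus $\gamma$ (resp.\ $\rho$), not $\Tpi - I$ with modulus $1-\gamma$, and what you need is the discrete contraction $\|\Ap\|_{2,S}\le\rho$ rather than decay of $e^{(\Ap-I)s}$---but your ``weighted/rotated norm'' is exactly the paper's $\|\cdot\|_{2,S}$ (Lemma~\ref{lemma:approx-norm}), so the argument goes through as you describe.
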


The assumption on diagonalizability of $\PKernelpi$ and $\Ap$ in Theorem~\ref{thm:acceleration-PID-TD} is for the sake of simplicity. In Appendix~\ref{sec:appendix-acceleratioon}, we provide a similar but more general result without this assumption. The upper bounds in Theorem~\ref{thm:acceleration-PID-TD} consist of two terms. The first term, which scales with the initial error $\lVert{V_0-V^\pi}\rVert_\infty$ can be interpreted as the \textit{optimization error}. It is the amount that $V_t$ still has to change to reach $V^\pi$. The second term can be considered as the \textit{statistical error}, which is independent of the initial error and exists even if we start from $V_0 = \Vpi$. Due to the conditions $\epsilon > 2/(1 - \gamma)$ and $\epsilon > 2/(1 - \rho)$, the statistical error is asymptotically dominant with rate $\OO(t^{-1})$ compared to $\OO(t^{-\epsilon(1-\gamma)})$ or $\OO(t^{-\epsilon(1-\gamma)})$ of the optimization error. Note that a larger $\epsilon$ accelerates the rate of optimization error, but together with larger $T$ (due to the condition on $T$) slows the convergence of the statistical error to zero. For example, it takes $T$ steps for the statistical error to become half of its initial value. For simplicity of discussion, we consider $\epsilon$ and $T$ fixed.

The difference between the two algorithms is in the rate that the optimization error goes to zero. This term for TD Learning is $\OO(t^{-\epsilon(1-\gamma)})$ and for PID TD Learning is $\OO(t^{-\epsilon(1-\rho)})$. When $\kappa_p = 1$ and $\kappa_I = \kappa_d = \alpha = 0$, we have $\rho = \gamma$,  and these two rates match. With a better choice of gains, one can have $\rho < \gamma$ \citep{farahmand2021pid} and achieve a faster rate for the optimization error. Even though this term is not asymptotically dominant, we show that its speed-up can be significant in the early stages of training, especially when the policy's behavior has low stochasticity. To show this, we first need to introduce the following definition. 

 \begin{definition}
     We say policy $\pi$ in MDP $(\XX, \AA, \PKernel, \RKernel, \gamma)$ is $d$-deterministic for some $d \in [0,1]$ if for all $x \in \XX$, we have $\Var[\RKernel^\pi(x)] \le (1-d)/4$ and $\max_{x'} \PKernel^\pi(x'|x) \ge d$.
 \end{definition}
 Due to our assumption that rewards are bounded within [0,1], any policy in any MDP is $0$-deterministic. The value of $d$ depends on the stochasticity of both the MDP and the policy, with a larger value corresponding to a more deterministic behavior. In the case where the policy and the MDP are both deterministic, the policy becomes $1$-deterministic. The following result shows how the initial optimization error compares to the statistical error based on this measure.

\begin{proposition}
\label{prop:error-terms-ratio}
     Assume the same conditions as in Theorem~\ref{thm:acceleration-PID-TD}. Suppose policy $\pi$ is $d$-deterministic in the environment. Let $\Eopttd(t)$ and $\Estattd(t)$ be the first and the second terms in the bound for error of TD Learning at iteration $t$, respectively. Define $\Eoptpid(t)$ and $\Estatpid(t)$ similarly. Define $c = \max((\kappa_p + \kappa_I\alpha)^2, \alpha^2)$. We have
     \begin{align*}
         \frac{\Eopttd(0)}{\Estattd(0)} \ge 
        \frac{\norm{V_0 - V^\pi}_\infty^2 (5\gamma^2n  (1-d)  + 2)  }{e n  (1-d)\left(1 + 40\gamma^2 \norm{V^\pi}_\infty^2\right)}
        \quad,\quad
        \frac{\Eoptpid(0)}{\Estatpid(0)} \ge 
        \frac{\norm{V_0 - V^\pi}_\infty^2 (15c\gamma^2n  (1-d)  + 2)  }{3e cn  (1-d)\left(1 + 40\gamma^2 \norm{V^\pi}_\infty^2\right)}.
     \end{align*}
 \end{proposition}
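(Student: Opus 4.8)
The plan is to evaluate both bounds of Theorem~\ref{thm:acceleration-PID-TD} at $t=0$, take the quotient of the two summands, eliminate the step-size parameters $\epsilon$ and $T$ using precisely the conditions they must satisfy, and then substitute the explicit forms of the constants $\ctd_i,\cpid_i$ together with the $d$-deterministic assumption. At $t=0$ the factor $(T/(t+T))^{\epsilon(1-\gamma)}$ equals $1$, so $\Eopttd(0)=\ctd_2\norm{V_0-V^\pi}_\infty^2$ and $\Estattd(0)=\frac{\epsilon^2(\ctd_3+\ctd_4\norm{V^\pi}_\infty^2)}{(\epsilon(1-\gamma)-1)T}$, with the analogous expressions (replacing $\gamma$ by $\rho$ and $\ctd_i$ by $\cpid_i$) on the PID side. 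Since $\epsilon(1-\gamma)>2$ gives $\epsilon(1-\gamma)-1\ge\tfrac12\epsilon(1-\gamma)$, and $T\ge\ctd_1\epsilon/(1-\gamma)$, we get $(\epsilon(1-\gamma)-1)T\ge\tfrac12\ctd_1\epsilon^2$ and hence
\[
\frac{\Eopttd(0)}{\Estattd(0)}\;\ge\;\frac{\ctd_1\ctd_2}{2}\cdot\frac{\norm{V_0-V^\pi}_\infty^2}{\ctd_3+\ctd_4\norm{V^\pi}_\infty^2},
\]
and, by the same argument with $\epsilon(1-\rho)>2$ and $T\ge\cpid_1\epsilon/(1-\rho)$, $\Eoptpid(0)/\Estatpid(0)\ge\tfrac{\cpid_1\cpid_2}{2}\cdot\norm{V_0-V^\pi}_\infty^2/(\cpid_3+\cpid_4\norm{V^\pi}_\infty^2)$. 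This is the only place the constraints on $\epsilon,T$ enter, and the resulting bound is free of both.

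It then remains to lower bound $\ctd_1\ctd_2,\cpid_1\cpid_2$ and to upper bound $\ctd_3+\ctd_4\norm{V^\pi}_\infty^2$ and $\cpid_3+\cpid_4\norm{V^\pi}_\infty^2$, using the explicit constants from the proof of Theorem~\ref{thm:acceleration-PID-TD} in Appendix~\ref{sec:appendix-acceleratioon}. The statistical constants are, up to absolute factors, upper bounds on the second moment of the martingale noise driving the synchronous update: for TD that noise at state $x$ is $(R_x-\rpi(x))+\gamma\big(V^\pi(X'_x)-\E[V^\pi(X'_x)]\big)$, whose variance is $\Var[\RKernel^\pi(x)]+\gamma^2\Var_{X'\sim\PKernelpi(\cdot\mid x)}[V^\pi(X')]$, while for PID TD the same increment is rescaled by the controller gains, which is exactly where $c=\max((\kappa_p+\kappa_I\alpha)^2,\alpha^2)$ enters $\cpid_3,\cpid_4$. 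The $d$-deterministic assumption supplies $\Var[\RKernel^\pi(x)]\le(1-d)/4$ directly, and the elementary fact that a distribution $p$ on $\XX$ with $\max_{x'}p(x')\ge d$ obeys $\Var_p[f]\le\tfrac12\sum_{i,j}p_ip_j(f_i-f_j)^2\le2\norm f_\infty^2\big(1-\sum_i p_i^2\big)\le4\norm f_\infty^2(1-d)$ bounds the second contribution by $O\!\big(\gamma^2\norm{V^\pi}_\infty^2(1-d)\big)$. Carrying these per-state estimates through the norm conversions in the analysis (the source of the dimension factor $n$) produces bounds of the shape $n(1-d)\big(1+40\gamma^2\norm{V^\pi}_\infty^2\big)$, with a residual additive term that does not carry $(1-d)$ giving the ``$+2$'' in the numerators; combining with the explicit values of $\ctd_1\ctd_2$ and $\cpid_1\cpid_2$ (the factor $c$ surviving in the PID bound because it also appears in $\cpid_2$, not only in $\cpid_3,\cpid_4$) and plugging into the displayed inequality yields the two stated bounds after collecting constants.

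The main obstacle is not conceptual but bookkeeping: one must carry through the precise values of $\ctd_1,\dots,\ctd_4$ and $\cpid_1,\dots,\cpid_4$ from the appendix — in particular how the gains $(\kappa_p,\kappa_I,\kappa_d,\alpha,\beta)$ propagate into $\cpid_2$ and into the noise constants $\cpid_3,\cpid_4$ — and be careful with the weighted-norm-to-$\ell_\infty$ conversion, since that step is what introduces the dimension $n$ and determines how $\norm{V^\pi}_\infty$ and $\norm{V_0-V^\pi}_\infty$ enter the final expression. Once those constants are pinned down, the remaining manipulations are the elementary ones above.
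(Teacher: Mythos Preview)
Your plan is essentially identical to the paper's proof: evaluate the two terms at $t=0$, use $\epsilon(1-\gamma)-1\ge\tfrac12\epsilon(1-\gamma)$ and $T\ge c_1\epsilon/(1-\gamma)$ (respectively with $\rho$) to eliminate $\epsilon,T$, and then plug in the explicit constants $c_i$ from the proof of Theorem~\ref{thm:acceleration-PID-TD}, which already encode the $d$-deterministic variance bounds from Lemma~\ref{lem:PID-noise-linear-boundedness}. Two small attributions to correct: the factor $c=\max((\kappa_p+\kappa_I\alpha)^2,\alpha^2)$ does \emph{not} appear in $\cpid_2=1/\lpid^2$ but enters the product $\cpid_1\cpid_2$ through $\cpid_1=8(\Bpid+2)/\lpid^2$ via $\Bpid=15c\gamma^2n(1-d)$ (and it ``survives'' in the final ratio only because the additive $+2$ in $\cpid_1$ is not scaled by $c$ while the denominator constants $\Cpid,\Bpid$ are); and the dimension factor $n$ comes from the vector noise bound $\E[\norm{W}_\infty^2]\le n\cdot\max_x\E[W(x)^2]$ in Lemma~\ref{lemma:PID-noise-linear-boundedness-vector}, not from the $\norm{\cdot}_{2,S}\leftrightarrow\norm{\cdot}_\infty$ conversion, whose constants $\ltd,\lpid$ in fact cancel completely in the ratio.
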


Proposition~\ref{prop:error-terms-ratio} shows that when the initial error $\norm{V_0 - \Vpi}_\infty$ is large or the policy behaves almost deterministically ($d$ close to $1$), the optimization error can make up the most of the error bound in Theorem~\ref{thm:acceleration-PID-TD}. In that case, the acceleration achieved by PID TD Learning in this term becomes significant in the early stages. It should be noted that our arguments in this section are based on the upper bounds on the errors of the algorithms as opposed to the errors themselves. This is a common limitation for theoretical comparisons of algorithms. In Section~\ref{sec:exps}, we further evaluate the convergence of the algorithms empirically.

\section{Gain Adaptation}
\label{sec:gain-adaptation}

The proper choice of controller gains is critical to both convergence and acceleration of our proposed algorithms. While it is possible to treat the gains as hyperparameters and tune them like any other hyperparameter, we address this by designing an automatic gain adaptation algorithm that tunes them on the fly during the runtime of the algorithm.

The design of the gain adaptation algorithm for PID TD Learning and PID Q-Learning is based on the same idea as gain adaptation in PID VI. Translating the update rule \eqref{eq:GA-update} to the sample-based settings faces two main challenges. First, the derivative $\partial \BRpi V_{k+1} / \partial \kappa_{\cdot}$ and normalization factor $\norm{\BRpi V_k}_2^2$ are not readily available without access to the transition dynamics $\PKernel$. Second, computing the inner product in \eqref{eq:GA-update} requires iterating over all states $x$, 
\begin{align*}
    \inner{\BRpi V_{k + 1},  \frac{\partial \BRpi V_{k + 1}}{\partial \kappa_\cdot}} = \sum_x (\BRpi V_{k+1})(x) \cdot  \frac{\partial (\BRpi V_{k+1})(x)}{ \partial \kappa_{\cdot} },
\end{align*}    
requiring the values of $\partial (\BRpi V_{k+1})(x) / \partial \kappa_{\cdot}$ and $(\BRpi V_{k+1})(x)$ for every $x$. We will see that using a sample $(X_t, A_t, R_t, X_t')$, these values can be estimated for $x = X_t$ but not for other states. A replay buffer could give us access to samples at more states or function approximation could directly provide estimates at all states. However, as these techniques suffer from memory and stability issues, a better solution is needed for this challenge.

To avoid the difficulty of the inner product term, we modify the update rule of the gains at iteration $t$ to minimize $(\BRpi V_{t + 1})(X_t)^2$ instead of $\norm{\BRpi V_{t + 1}}_2^2$. This modification is similar to performing stochastic gradient descent instead of gradient descent. Instead of defining the loss over the whole state space, we consider the loss on a single sampled state. Consequently, the new term only depends on the values at $X_t$. We get the following update:
\begin{align}
\label{eq:GA-update-TD1}
\kappa_\cdot &\gets \kappa_\cdot - \eta \frac{2}{\norm{\BRpi V_t}_2^2} \cdot\frac{\partial \frac{1}{2}(\BRpi V_{t + 1})(X_t)^2}{\partial \kappa_\cdot} \\
&= \kappa_\cdot - \eta  \frac{1}{\norm{\BRpi V_t}_2^2} \cdot (\BRpi V_{t + 1})(X_t) \cdot  \frac{\partial (\BRpi V_{t + 1})(X_t)}{\partial \kappa_\cdot}.
\end{align}

The term $(\BRpi V_{t + 1})(X_t)$ above can be estimated in a similar manner to \eqref{eq:br-estimate}. Estimating the derivative $\partial (\BRpi V_{t+1}) (X_t) / \partial \kappa_{\cdot}$ as well as $(\BRpi V_{t + 1})(X_t)$ in an unbiased way is another challenging problem. It is known that forming an unbiased estimate for both of these quantities with only one sample at state $X_t$ leads to double-sampling issues \citep{BAIRD199530}. As in prior work \citep{kearney2018tidbd}, we use the semi-gradient trick for this problem. Specifically, we treat the $\Tpi V$ term in $\BRpi V$ as constant and ignore its derivative. This yields the estimate
\begin{align*}
\frac{\partial (\BRpi V_{t + 1})(X_t)}{\partial \kappa_\cdot} = \frac{\partial}{\partial \kappa_\cdot} \left[ r^\pi(X_t) + \gamma \sum_{x'} \mathcal{P}^\pi(x' | X_t) V_{t+1}(x') - V_{t+1}(X_t) \right] \approx -\frac{\partial V_{t+1}(X_t)}{\partial \kappa_\cdot}.
\end{align*}
When calculating $\frac{\partial V_{t+1}(X_t)}{\partial \kappa_\cdot}$, we further ignore the effect of gains on $V_t$, setting $\frac{\partial V_{t}}{\partial \kappa_\cdot} \approx 0$, and also drop the learning rate $\mu(N_t(X_t))$ to absorb it into $\eta$. These derivatives can be calculated based on \eqref{eq:pid-td} and are given in Appendix~\ref{sec:appendix-ga}. Finally, the normalization term is estimated by keeping an exponential moving average with smoothing factor $\lambda$ of the square of estimates of the Bellman Residual in the past iterations. The detailed version of PID TD Learning and PID Q-Learning with gain adaptaion is shown in Algorithms~\ref{alg:PID-ALG} and \ref{alg:PID-Q-learning-ALG} in Appendix~\ref{sec:appendix-ga}.

\section{Empirical Results}
\label{sec:exps}

\begin{figure}[h]
    \centering
    \begin{subfigure}{0.48\linewidth}
        \includegraphics[width=\linewidth]{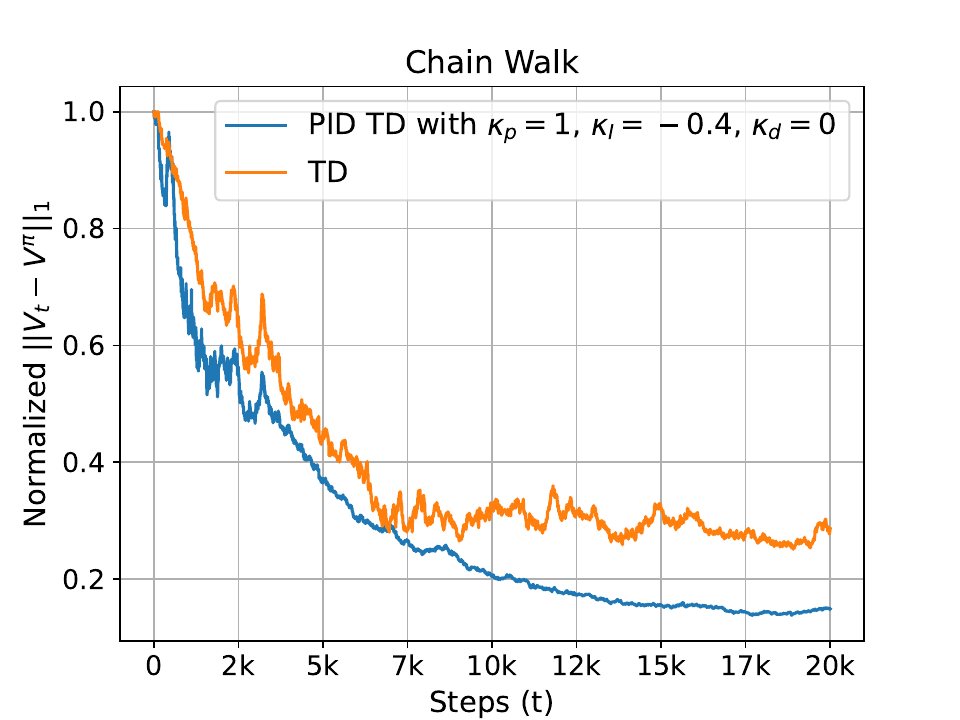}
    \end{subfigure}
    \hfill
    \begin{subfigure}{0.48\linewidth}
        \includegraphics[width=\linewidth]{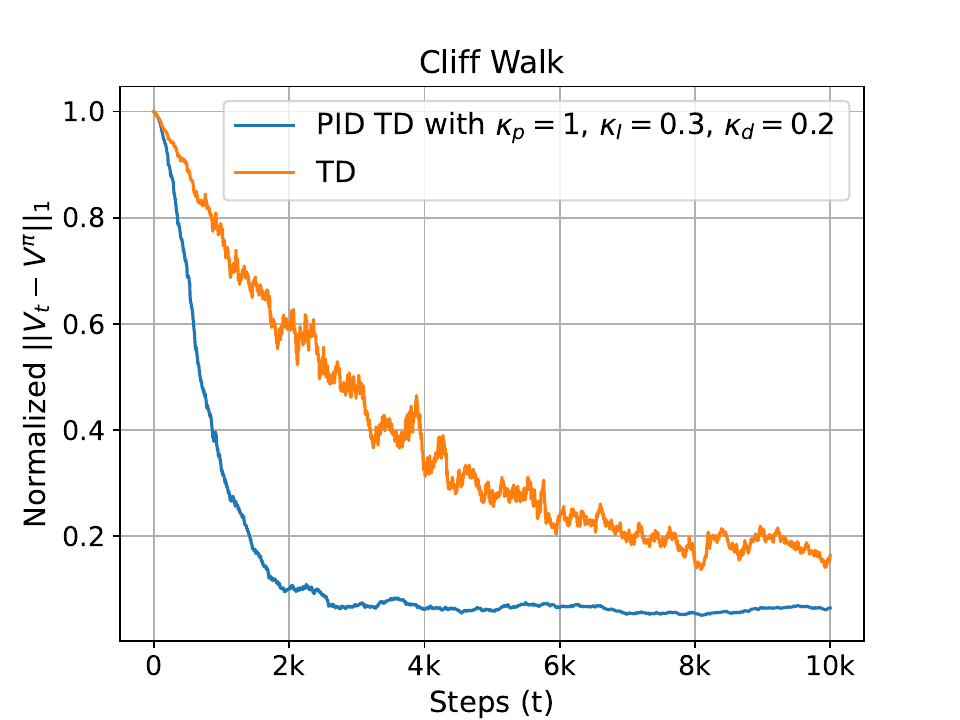}
    \end{subfigure}
    \caption{Comparison of PID TD Learning with Conventional TD Learning in Chain Walk (left) and Cliff Walk (right) with $\gamma = 0.99$. Each curve is averaged over 80 runs. Shaded areas show the standard error.}
    \label{fig:PE-fixed-gains}
\end{figure}

We empirically compare PID TD Learning and PID Q-Learning with their conventional counterparts. We conduct experiments in the 50-state Chain Walk environment with 2 actions \citep{farahmand2021pid}, the Cliff Walk environment with $6 \times 6$ states and 4 actions \citep{rakhsha2022operator}, and randomly generated Garnet MDPs with 50 states and 3 actions \citep{bhatnagar2009natural}. Detailed descriptions of these environments and the policies evaluated can be found in Appendix~\ref{sec:appendix-experiment-description}. For each sample $(X_t, A_t, R_t, X_t')$, we choose $X_t$ uniformly at random and $A_t$ is chosen according to $\pi$ (for PE) or at random (for Control). We measure the error of value functions $V_t$ and $Q_t$ for PE and Control problems by their normalized error defined as $\norm{V_t - V^\pi}_1/\norm{V^\pi}_1$ and $\norm{Q_t - Q^*}_F/\norm{Q^*}_F$, respectively, where $\norm{Q}_F \triangleq (\sum_{x,a} Q(x,a)^2)^{\frac{1}{2}}$.

For all learning rates, we use state-count dependent schedules of the form $\mu(N_t(X_t)) = \min(\epsilon, N_t(X_t)/M)$ for some choice of $\eps$ and $M$ for all algorithms (including PID Q-Learning and Q-Learning). To achieve the best results for all algorithms, we use separate learning rates for $V, V', z$ components of PID TD Learning and $Q, Q', z$ components of PID Q-Learning. The hyperparameters $\epsilon, M$ of all learning rate schedules are tuned by gridsearch over a range of values. The details of hyperparameter tuning are provided in Appendix~\ref{sec:appendix-learning-rates}.

In Figure~\ref{fig:PE-fixed-gains}, we compare PID TD Learning with TD Learning when the gains are fixed and $\gamma=0.99$. In this case the acceleration depends on the choice of gains and the environment. We observe that we can achieve a drastic acceleration in Cliff Walk, and a minor acceleration in Chain Walk. 
We further investigate the speed-up achieved by PID TD Learning in Cliff Walk. In Figure~\ref{fig:cliffwalk-adapt-gains}, we observe that with Gain Adaptation and $\gamma = 0.999$, we achieve a significant acceleration without the need to tune the controller gains. Figure~\ref{fig:cliffwalk-adapt-gains} also shows how Gain Adaptation has modified the gains from their initial values.

To evaluate the acceleration in the Control problem, we compare PID Q-Learning with Gain Adaptation with Q-Learning. Figure~\ref{fig:chainwalk-adapt-gains} shows this comparison in Chain Walk with $\gamma = 0.999$, where PID Q-Learning shows acceleration. Finally, to draw a more conclusive comparison, we compare our algorithms with the conventional ones on $80$ randomly generated Garnet MDPs with $\gamma = 0.99$ in Figure~\ref{fig:garnet-plot}. We see that our algorithms outperform TD Learning and Q-Learning in both PE and Control problems.

\begin{figure}[h]
    \centering
    \begin{subfigure}{0.48\linewidth}
        \includegraphics[width=\linewidth]{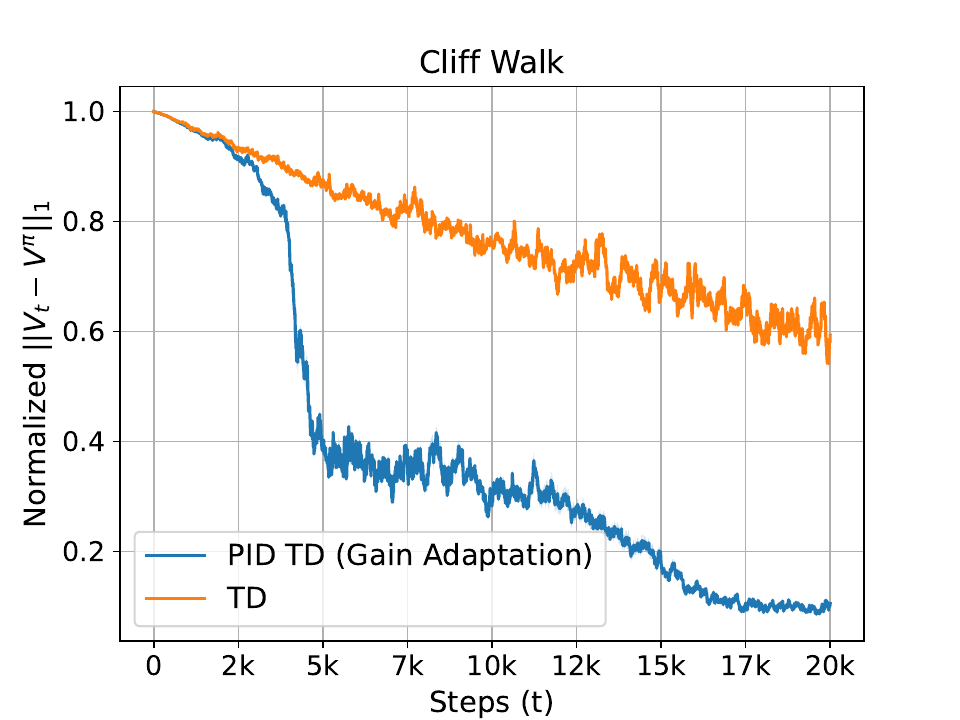}
    \end{subfigure}
    \hfill
    \begin{subfigure}{0.48\linewidth}
        \includegraphics[width=\linewidth]{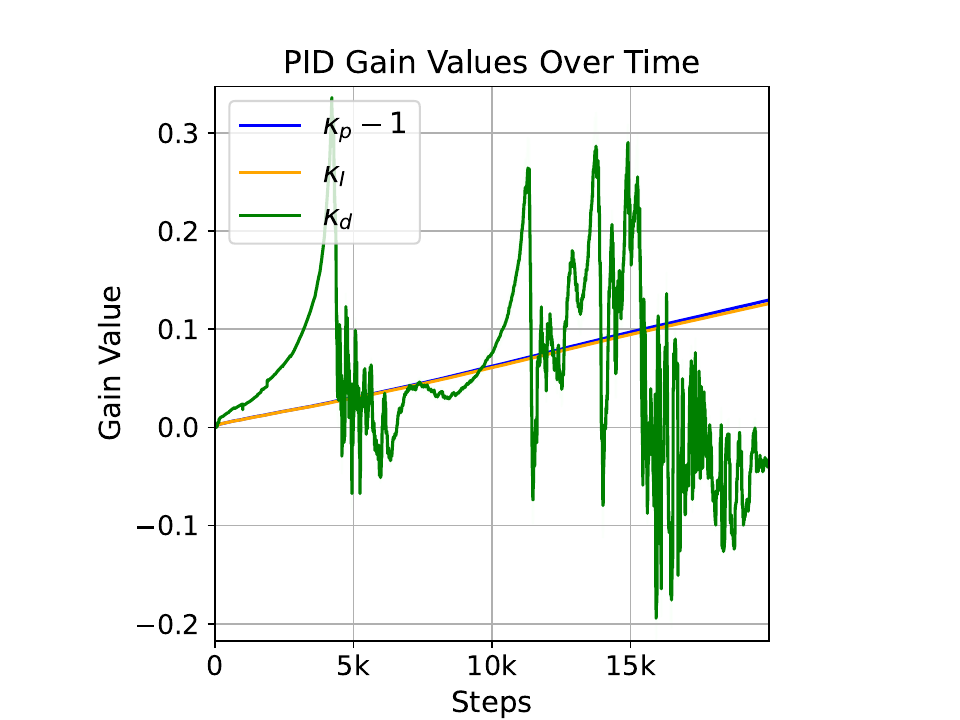}
    \end{subfigure}
    \caption{PID TD Learning with Gain Adaptation in Cliff Walk with $\gamma = 0.999$. \textit{(Left)} Comparison of value errors of PID TD Learning with TD Learning. Each curve is averaged over 80 runs. Shaded area shows standard error. \textit{(Right)} The change of gains done by Gain Adaptation through training.}
    \label{fig:cliffwalk-adapt-gains}
\end{figure}

\begin{figure}[h]
    \centering
    \begin{subfigure}{0.48\linewidth}
        \includegraphics[width=\linewidth]{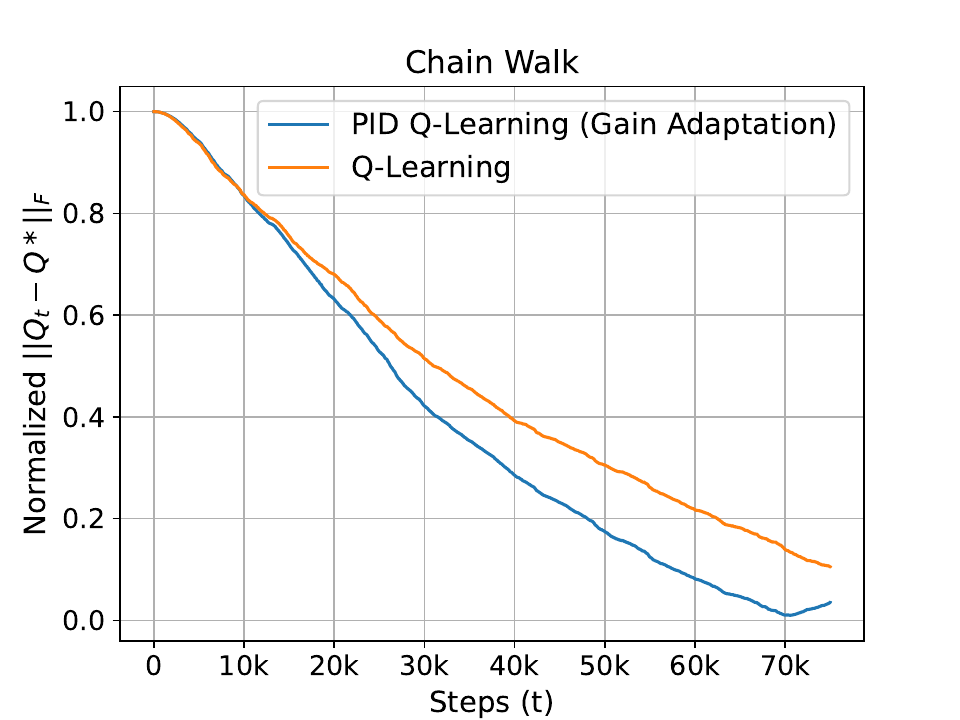}
    \end{subfigure}
    \hfill
    \begin{subfigure}{0.48\linewidth}
        \includegraphics[width=\linewidth]{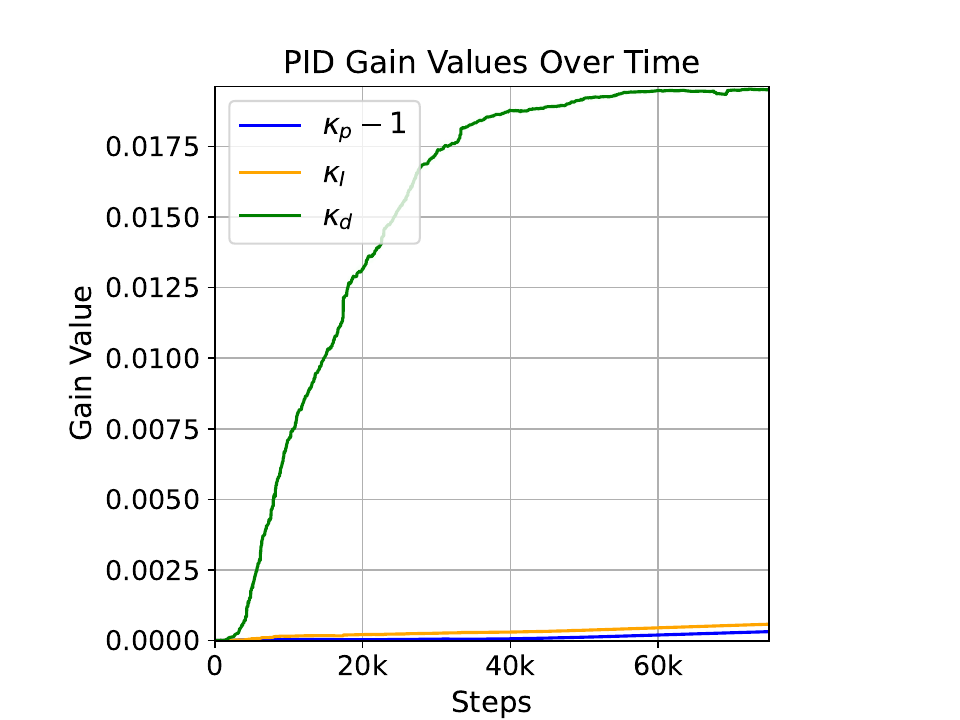}
    \end{subfigure}
    \caption{PID Q-Learning with Gain Adaptation in Chain Walk with $\gamma = 0.999$. \textit{(Left)} Comparison of value errors of PID Q-Learning with Q-Learning. Each curve is averaged over 80 runs. Shaded area shows standard error. \textit{(Right)} The change of gains done by Gain Adaptation through training.}
    \label{fig:chainwalk-adapt-gains}
\end{figure}

\begin{figure}
    \centering
    \begin{subfigure}{0.48\linewidth}
        \includegraphics[width=\linewidth]{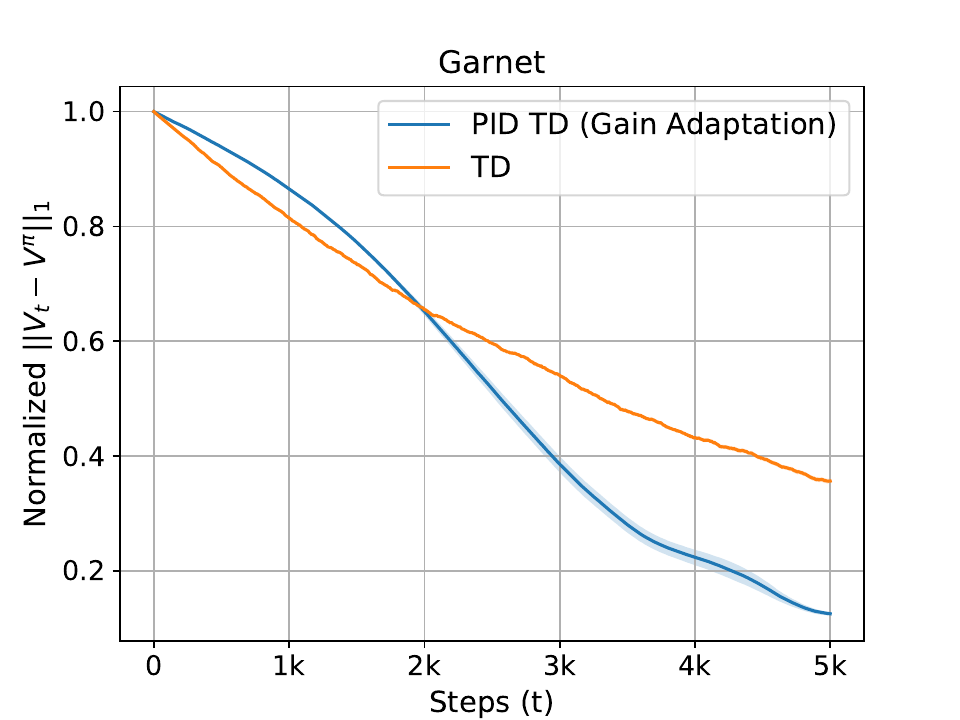}
    \end{subfigure}
    \hfill
    \begin{subfigure}{0.48\linewidth}
        \includegraphics[width=\linewidth]{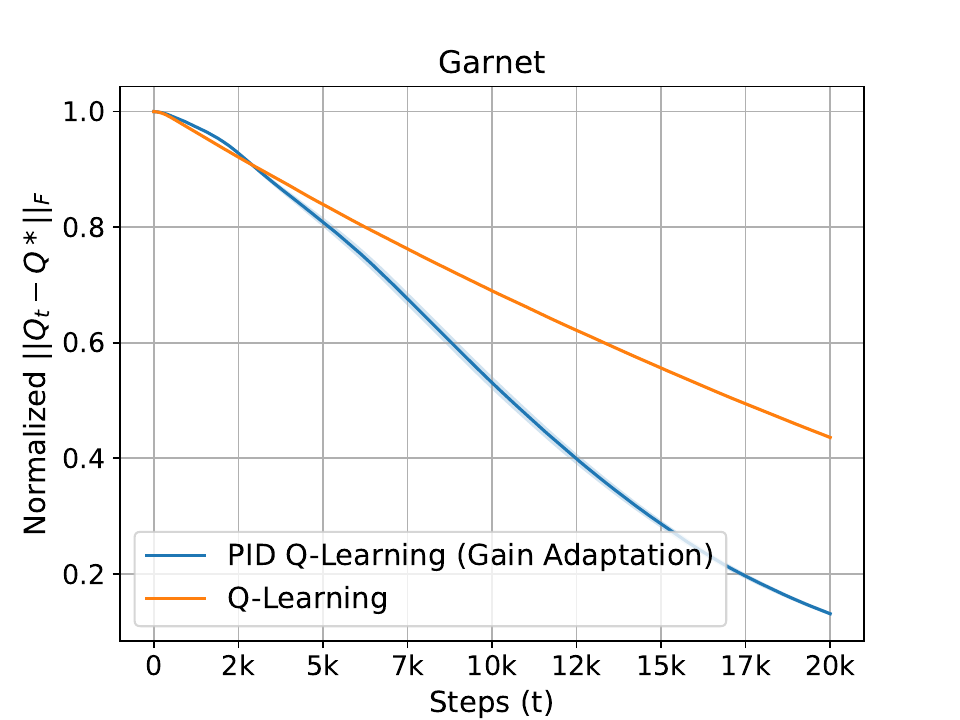}
    \end{subfigure}
    \caption{Comparison of PID Accelerated algorithms with the conventional ones for PE (Left) and Control (Right) problems in randomly generated Garnet environments with $\gamma = 0.99$. Each curve is an average of 80 MDPs, run for 80 times each. Shaded area shows standard error.}
    \label{fig:garnet-plot}
\end{figure}


\section{Related Work}
\label{sec:relatedwork}

There is a growing literature of applying acceleration techniques to RL. Similar to PID VI, which as we showed leads to PID TD Learning and PID Q-Learning, many accelerated dynamic programming methods have closely related RL algorithms. The idea of momentum has been used for faster dynamic programming algorithms by \citet{vieillard2020momentum,vineet2023accelerated} and in RL setting has led to Speedy Q-Learning \citep{ghavamzadeh2011speedy} and Momentum Q-Learning \citep{bowen2021finite}. Zap Q-Learning \citep{devraj2017zap} is another accelerated variant of Q-Learning based on second-order optimization methods. Anderson acceleration \citep{anderson1965iterative} has been used for Anderson VI \citep{geist2018anderson} and Anchoring acceleration \citep{halpern1967fixed} is used in Anchord VI \citep{lee2023accelerating}. Matrix splitting is used to derive Operator Splitting VI (OSVI) \citep{rakhsha2022operator} and Deflated Dynamics VI (DDVI) \citep{lee2024deflated}, which are both extended to the RL setting through stochastic approximation. Recently, \citet{rakhsha2024maximum} has introduced the Model Correcting VI (MoCoVI) and Model Correcting Dyna (MoCoDyna) algorithms that achieve acceleration through model correction.

Gain adaptation in general has a long history in RL and closely-related literature. \cite{kesten1958accelerated} used an adaptive mechanism in the context of stochastic approximation in the 1950s. They describe a method for choosing the learning rate of SA that is very similar to the P component of the gain adaptation procedure we naturally derive. However, the algorithm is ad hoc in nature, and is not compatible with function approximation in any natural way.
First order methods of adapting hyperparameters have been proposed, including IDBD \citep{sutton1992adapting}, the recent RL focused variant TIDBD \citep{kearney2018tidbd}, and SMD \citep{schraudolph1999local} which all tune learning rates by finding the gradient with respect to the history of errors. We refer to \cite{sutton2022history} for a more in-depth history and overview of such techniques. These approaches are limited to controlling only the learning rate of the procedure and thus only attacking the error from sampling, not the bootstrapping error.

\section{Conclusion}
\label{sec:conclusion}

We showed how recent advances in accelerated planning and dynamic programming, specifically the PID Value Iteration algorithm, can be used to design algorithms for the RL setting. 
The proposed PID TD Learning and PID Q-Learning algorithms are accompanied by a gain adaptation mechanism, which tunes their hyperparameters on the fly.
We provided theoretical analysis as well as empirical studies of these algorithms.

One limitation of the current work is that the proposed algorithms are only developed for finite MDPs where the value function, and all relevant quantities, can be represented exactly. For large MDPs, for example with continuous state spaces, we need to use function approximation. Developing PID TD Learning and PID Q-Learning with function approximation is therefore one important future direction.
Another limitation of this work is that the gain adaptation procedure, even though empirically reliable, does not come with a convergence guarantee. Moreover, small changes in its hyperparameters, such as its meta-learning rate $\eta$, can cause large changes in the trajectory the value function takes during training. Another interesting research direction is then to develop a gain adaptation procedure that is less sensitive to the choice of hyperparameters and has a convergence guarantee.
Finally, this work shows that the dynamics of RL can be significantly influenced by the PID controller, one of the simplest controllers in the arsenal of control engineering. Developing Planning and RL algorithms based on more sophisticated controllers is another promising research direction.

%

\section*{Acknowledgements and Disclosure of Funding}

We would like to thank the other members of the Adaptive Agents (Adage) Lab who provided feedback on a draft of this paper, and the anonymous reviewers whose comments helped us improve the
clarity of the paper.
AMF acknowledges the funding from the Canada CIFAR AI Chairs program, as well as the support of the Natural Sciences and Engineering Research Council of Canada (NSERC) through the Discovery Grant program (2021-03701). Resources used in preparing this research were provided, in part, by the Province of Ontario, the Government of Canada through CIFAR, and companies sponsoring the Vector Institute.

\bibliographystyle{plainnat}
\bibliography{MyBib}

\newpage
\appendix



\section{Proofs for Convergence Results (Section~\ref{sec:pid-theory-convergence})}

In this section, we present the proof of Theorem~\ref{thm:conv-PID-TD}. We first present some notation, and then the assumptions on the learning rate schedule $\mu$ and sequence of visited samples $(X_t)_{t \ge 0}$. Let $r^\pi \in \mathbb{R}^n$ be the vector of the expected immediate rewards of following the policy at each state.

Now we move on the assumptions for Theorem~\ref{thm:conv-PID-TD}.

\begin{assumption}[Properly Tapering Learning Rate Schedule] 
    \label{ass:lr_schedule}The learning rate schedule $\mu \colon \mathbb{Z} \to \reals^+$ satisfies the following:
    \begin{enumerate}[label=(\roman*)]
        \item We have $0 < \mu(t) \le 1$ for any $t \ge 0$, and 
        \begin{equation*}\sum_{t=0}^\infty \mu(t) = \infty \qquad , \qquad  \sum_{t=0}^\infty \mu(t)^2 < \infty.\end{equation*}
        \item For some $T$, we have $\mu(t + 1) < \mu(t)$ for all $t \ge T$.
        \item For $z \in (0,1)$, $\sup_t \mu([zt])/\mu(t) < \infty$, where $[\cdot]$ is the integer part of a number.
        \item For $z \in (0,1)$, 
        \begin{equation*}
            \lim_{t\to \infty} \left(\sum_{i=0}^{[zt]}\mu(i)\right) \bigg/ \left(\sum_{i=0}^{t}\mu(i)\right) = 1.
        \end{equation*}
    \end{enumerate}
\end{assumption}
Examples of learning rate schedules that satisfy Assumption~\ref{ass:lr_schedule} includes $\mu(t) = \frac{1}{t+1}$.
The next assumption is on the balanced updates of states.
\begin{assumption}[Balanced Updates of States]
    \label{ass:balanced_visit}
    The sequence of visited states $(X_t)_t$ and learning rate schedule $\mu$ is such that we have

    \begin{enumerate}[label=(\roman*)]
        \item There exists deterministic $\Delta > 0$, such that for all $x \in \XX$
        \begin{align*}
            \liminf_{t \to \infty} \frac{N_t(x)}{t} \ge \Delta  \qquad \mathrm{a.s.}
        \end{align*}
        \item If $T_t(z) \triangleq \min \{t' > t \colon \sum_{i=t+1}^{t'} \mu(i) > z\}$, for any $z > 0$ and states $x_1, x_2 \in \XX$, the following limit exists
        \begin{equation*}
            \lim_{t \to \infty} \frac{ \sum_{i=N_t(x_1)}^{N_{T_t(z)}(x_1)}\mu(i) }{\sum_{i=N_t(x_2)}^{N_{T_t(z)}(x_2)}\mu(i)}.
        \end{equation*}
    \end{enumerate}
\end{assumption}
Intuitively, Assumption~\ref{ass:balanced_visit} asserts that all states are visited often enough and get balanced sum of learning rates.
Before presenting the proof for Theorem~\ref{thm:conv-PID-TD}, we first prove the following auxiliary lemma.
\begin{lemma}
    \label{lem:PID-noise-linear-boundedness} Assume policy $\pi$ in the environment is $d$-deterministic and $x \in \XX$ is arbitrary. Let $R$ and $X'$ be the random obtained reward and next state after following policy $\pi$ from $x$ in the environment. Let
    $W = R + \gamma V(X') - (T^\pi V)(x)$ for an arbitrary $V \colon \XX \to \reals$. We have
    \begin{equation*}
        \E\left[W^2\right] \leq \frac{1-d}{4} + 5\gamma^2(1-d)\norm{V}^2_\infty.
    \end{equation*}
    Moreover, for some $\fullV, \texttt{f}$, let $\hat L$ be the estimator of $(\Lp \fullV)(x, \texttt{f})$ derived in PID TD Learning's update \eqref{eq:general-TD-update} according to the sample $(x, R, X')$. Assume $\tilde{W} = \hat L - (\Lp \fullV)(x, \texttt{f})$ is its noise. We have
    \begin{equation*}
        \E\left[\tilde{W}^2\right] \leq \max((\kappa_p + \kappa_I\alpha)^2, \alpha^2)\left(\frac{1-d}{4} + 5\gamma^2(1-d)\norm{\fullV}^2_\infty\right).
    \end{equation*}
\end{lemma}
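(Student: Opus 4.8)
The plan is to bound $\E[W^2]$ directly by writing $W = R + \gamma V(X') - r^\pi(x) - \gamma \sum_{y} \PKernelpi(y\mid x) V(y)$, which is a difference of a random variable from its mean, and then exploiting the $d$-deterministic structure. First I would split $W$ using the triangle-type inequality $\E[W^2] \le 2\,\Var[R] + 2\gamma^2\,\Var[V(X')]$ where the variances are with respect to the randomness of the sample from $x$. The reward variance term is bounded by $(1-d)/4$ directly from the definition of $d$-determinism, namely $\Var[\RKernel^\pi(x)] \le (1-d)/4$. For the second term, the key observation is that $\max_{y}\PKernelpi(y\mid x) \ge d$ means the distribution $\PKernelpi(\cdot\mid x)$ puts at least mass $d$ on a single next-state $y^\star$; writing $\Var[V(X')] \le \E[(V(X') - V(y^\star))^2] = \sum_{y} \PKernelpi(y\mid x)(V(y)-V(y^\star))^2 \le (1-d)\cdot (2\norm{V}_\infty)^2 = 4(1-d)\norm{V}_\infty^2$, since the $y=y^\star$ term vanishes and every other factor $(V(y)-V(y^\star))^2 \le 4\norm{V}_\infty^2$. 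Combining, $\E[W^2] \le 2\cdot\frac{1-d}{4} + 2\gamma^2\cdot 4(1-d)\norm{V}_\infty^2 = \frac{1-d}{2} + 8\gamma^2(1-d)\norm{V}_\infty^2$, which is slightly looser than the claimed bound — I would need to be more careful with the constants (e.g. avoid the factor-2 split, or use $\Var[V(X')]\le \E[(V(X')-\E V(X'))^2]$ with a sharper centered bound) to land exactly at $\frac{1-d}{4} + 5\gamma^2(1-d)\norm{V}_\infty^2$. I expect a cleaner route is to not split at all: $W$ is a single centered random variable, write $W = (R - r^\pi(x)) + \gamma(V(X') - \E[V(X')])$, bound $\E[W^2]$ via $\E[W^2] = \Var[R] + 2\gamma\,\Cov(R, V(X')) + \gamma^2\Var[V(X')]$ and use Cauchy–Schwarz on the covariance, or simply note $\abs{W} \le \abs{R - r^\pi(x)} + \gamma\abs{V(X') - \E V(X')}$ and track the squared bound carefully; this is the routine-but-fiddly part.

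For the second inequality, I would express the estimator noise $\tilde W$ in terms of $W$. From the update~\eqref{eq:general-TD-update}, the estimator $\hat L_{t,\tf}$ of $(\Lp\fullV)(x,\tf)$ differs from its target only through the appearance of $\BRhat$ (the one-sample Bellman residual estimate) in place of the true $\BRpi V$. Inspecting the definition of $\Lp$ in~\eqref{eq:Lp-def}: for $\tf = \tv$ the target contains $(\kappa_p + \kappa_I\alpha)\BRpi V$, so the noise is $(\kappa_p+\kappa_I\alpha)(\BRhat - \BRpi V(x)) = (\kappa_p+\kappa_I\alpha)W$; for $\tf = \tz$ the target contains $\alpha\BRpi V$, so the noise is $\alpha W$; and for $\tf = \tvp$ the target is $V(x)$, which is deterministic given $\fullV$, so the noise is $0$. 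Hence in all three cases $\tilde W = c_\tf W$ with $\abs{c_\tf} \le \max(\abs{\kappa_p+\kappa_I\alpha}, \abs{\alpha})$, so $\E[\tilde W^2] = c_\tf^2\,\E[W^2] \le \max((\kappa_p+\kappa_I\alpha)^2,\alpha^2)\,\E[W^2]$, and applying the first part (with $\norm{V}_\infty \le \norm{\fullV}_\infty$, since $V$ is the first block of $\fullV$) gives the claimed bound.

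The main obstacle is purely getting the constants in the first inequality exactly right — the $5$ and the $1/4$ — since a naive split-and-bound gives something in the right form but with worse constants; the argument structure itself (centered random variable, $d$-determinism controls both the reward spread and the next-state spread) is straightforward. The second inequality is then essentially bookkeeping: identify the scalar multiplier of $W$ in each of the three coordinates of the $\Lp$ operator and take the max.
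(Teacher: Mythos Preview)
Your proposal is essentially the paper's argument. For the second inequality your case-split on $\tf\in\{\tv,\tz,\tvp\}$ giving $\tilde W = c_\tf W$ with $c_\tf\in\{\kappa_p+\kappa_I\alpha,\ \alpha,\ 0\}$ is exactly what the paper does. For the first inequality, the paper takes precisely your ``cleaner route'': it expands $\E[W^2]=\Var[R]+2\gamma\Cov(R,V(X'))+\gamma^2\Var[V(X')]$ and simply drops the cross term, writing $\E[W^2]=\Var[R]+\gamma^2\Var[V(X')]$ (so no Cauchy--Schwarz is needed; this is where your constant uncertainty was coming from). The only other difference is in bounding $\Var[V(X')]$: you use $\Var[V(X')]\le \E[(V(X')-V(y^\star))^2]\le 4(1-d)\norm{V}_\infty^2$, whereas the paper conditions on the event $\{X'=x^\star\}$ via the law of total variance and obtains the slightly looser $5(1-d)\norm{V}_\infty^2$, which is the source of the constant $5$ in the statement. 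Your bound is in fact sharper on that term; the paper's choice just happens to match the stated constant.
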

\begin{proof}
For the first part, we write
\begin{align*}
    \E[W^2]
    &= \E\left[(\;R + \gamma V(X') - r^\pi(x) - \gamma \E[V(X')]\;)^2\right]\\
    &= \E\left[(R - r^\pi(X))^2\right] + \gamma^2 \E\left[ (\; V(X') - \E[V(X')] \;)^2\right] +   2\E[(R - r^\pi(x)) (V(X') - \E[V(X')])]\\
    &= \Var[R] + \gamma^2 \Var[V(X')]\\
    &\le \frac{1-d}{4} + \gamma^2 \Var[V(X')],
\end{align*}
where the last inequality is by the definition of $d$-deterministic MDP. Now let $p^* = \max_{x'} \PKernelpi(x'|x)$ and $x^* = \argmax_{x'} \PKernelpi(x'|x)$. Due to the law of total variance \citep[Example 9.5.5]{blitzstein2014introduction}, we have
\begin{align*}
    \Var[V(X')] 
    &=  p^* \Var[V(X') | X'=x^*]  + (1 - p^*) \Var[V(X') | X'\ne x^*] \\
    &\qquad \quad + p^*(1-p^*)\bigg(\E[V(X') | X' =x*] - \E[V(X') | X' \ne x*] \bigg)^2\\
    &\le (1-p^*)\norm{V}_\infty^2 + 4p^*(1-p^*) \norm{V}_\infty^2 \\
    &\le 5(1-d) \norm{V}_\infty^2.
\end{align*}
Together, we obtain 
\begin{align*}
    \E[W^2] \le \frac{1-d}{4} + 5(1-d) \norm{V}_\infty^2.
    \end{align*}
For the second part, we consider three cases. If $\texttt f = \texttt v$, we have
\begin{equation*}
    \tilde W = (\kappa_p + \kappa_I\alpha) W.
\end{equation*}
If $\texttt f = \texttt z$, we have
\begin{equation*}
    \tilde W = \alpha W.
\end{equation*}
If $\texttt f = \texttt v'$, we have
\begin{equation*}
    \tilde W = 0.
\end{equation*}
Combining all cases, we get
    \begin{align*}
        \tilde{W}^2 \le \max((\kappa_p + \kappa_I\alpha)^2, \alpha^2) W^2,
    \end{align*}
which means
\begin{align*}
       \E\left[\tilde{W}^2\right]  \le \max((\kappa_p + \kappa_I\alpha)^2, \alpha^2) \left(\frac{1-d}{4} + 5\gamma^2(1-d)\norm{\fullV}^2_\infty\right).
    \end{align*}
\end{proof}

\subsection*{Proof of Theorem~\ref{thm:conv-PID-TD}}
\begin{proof}

    We show the claim by applying the result by \citet[Theorem 2.5]{borkar2000ode} to our algorithm. We describe how PID TD Learning \eqref{eq:pid-td} is a special case of a convergent asynchronous stochastic approximation in \citep{borkar2000ode}. PID TD Learning updates three entries of $\fullV_t$ at each iteration. Therefore, our set of indices that are updated (noted by $Y(n)$ in the original paper) is $Y_t \triangleq \{(X_t, \tv), (X_t, \tz), (X_t, \tvp)\}$. The number of times the value for $(x, \tf) \in \XX \times \{\tv, \tz, \tvp\}$ is updated (noted by $\nu(i, n)$ in the original paper) is $N_t(x)$ and the communication delays are zero in our case. With these choices, the asynchronous stochastic approximation of \citep[Equation 2.8]{borkar2000ode} becomes
    \begin{align}
        \fullV_{t+1}(X_t, \tf) \leftarrow \fullV_{t}(X_t, \tf) + \mu(N_t(X_t)) f[\fullV_t, D_{t}](X_t, \tf) \quad (\forall \tf \in \{\tv, \tz, \tvp\}),
    \end{align}
    and the other entries in $\fullV_{t+1}$ remain the same as $\fullV_t$. Here, $D_t \in \DD$ for all $t$ are independently and identically distributed (i.i.d.), and $f \colon \reals^{3n} \times \DD \to \reals^{3n}$ can be an arbitrary function. To obtain PID TD Learning in this form, we define $\DD \triangleq (\XX \times [0,1])^n$. For any $t$, we choose $D_{t}$ to be a dataset $\{(R_{x, t}, X'_{x, t})\}_{x \in \XX}$ where for each state $x$ contains the random reward $R_{x, t} \sim \RKernel^\pi(x)$ and $X'_{x, t} \sim \PKernelpi(\cdot | x)$. Then we define $f$ such that for all $x \in \XX$,
    \begin{align*}
        &f[\fullV_t, D_{t}](x, \tv) \triangleq \kappa_p b_{x,t} 
    + \kappa_I(\beta z_t(x) + \alpha b_{x,t})  + \kappa_d (V_t(x) - V'_t(x)),\\
    &f[\fullV_t, D_{t}](x, \tz) \triangleq \beta z_t(x) + \alpha b_{x,t}- z_t(x),\\
    &f[\fullV_t, D_{t}](x, \tvp) \triangleq  V_t(x) - V'_t(x).
    \end{align*}
    where 
    $
    b_{x,t} = R_{x,t} + \gamma V_t(X'_{x,t}) - V_t(x)
    $. This yields the exact same PID TD Learning updates.
    
    The function $h(\fullV) = \mathbb{E}[f(\fullV, D_1)]$ in \citep{borkar2000ode} is equal to $\Lp \fullV - \fullV$ in our setting. Note that $h$ is Lipschitz since it is an affine linear operator.
    The function $h_\infty(\fullV)$ exists and is equal to $(\Ap - I)\fullV$. Therefore, we require the origin point to be an asymptotically stable equilibrium of the ODE
     \begin{equation*}
        \dot{u}(t) = h_\infty(u(t)) = (\Ap - I)u(t),
    \end{equation*}
    which is satisfied due to the assumption on the eigenvalues of $\Ap$ and the fact that the solution of the above ODE is $\exp{[(\Ap - I)t]}u_0$ \citep{teschl2012ordinary} for any starting point $u_0$. 
    
    Furthermore, we note that  the unique globally asymptotically stable equilibrium of ODE
     \begin{equation*}
        \dot{u}(t) = h(u(t)) = \Lp u(t) - u(t) = (\Ap - I)u(t) + \Bp.
    \end{equation*}
   is $-(\Ap - I)^{-1}\Bp$ which is equal to $\fullVpi$ due to $\fullVpi = \Lp \fullVpi = \Ap \fullVpi + \Bp$.

    Finally, note that Lemma~\ref{lem:PID-noise-linear-boundedness} established the required property of the noise. The remaining assumptions of \citet{borkar2000ode} are satisfied due to Assumption~\ref{ass:lr_schedule} and \ref{ass:balanced_visit}.

\end{proof}

\section{Proofs for Acceleration Results (Section~\ref{sec:pid-theory-acceleration})}
\label{sec:appendix-acceleratioon}

Before presenting the proof of the Theorems, we introduce these definitions.

\begin{definition}
    Let $f \colon \reals^d$ be a convex, differentiable function. Then $f$ is said to be $L$-smooth with respect to (w.r.t.) norm $\norm{\cdot}$ if and only if
    \begin{align*}
        f(y) \le f(x) + \inner{\nabla f(x), y - x} + \frac{L}{2} \norm{y - x}^2 \qquad \forall x,y \in \reals^d.
    \end{align*}
\end{definition}

\begin{definition}
    For any non-singular matrix $S \in \reals^{d\times d}$, we define the vector norm $\norm{v}_{2, S} \triangleq \norm{Sv}_2$. For any matrix $A$, we let $\norm{A}_{2,S}$ be the matrix norm of $A$ induced by the vector norm $\norm{\cdot}_{2,S}$.
\end{definition}

We also need the following lemmas.

\begin{lemma}
    \label{lemma:approx-norm}
    Let $A \in \reals^{d \times d}$ and $\delta \ge 0$. If $A$ is not diagonalizable, further assume $\delta > 0$. There exists an invertible matrix $S$ such that $\norm{A}_{2, S} \le \rho(A) + \delta$ and for any $v \in \reals^d$, $\norm{v}_{2, S} \le \norm{v}_\infty$.
\end{lemma}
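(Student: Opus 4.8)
The plan is to prove Lemma~\ref{lemma:approx-norm} via the Jordan normal form of $A$, combined with a diagonal rescaling that shrinks the off-diagonal Jordan blocks, and then a further rescaling to ensure the norm-domination $\norm{v}_{2,S} \le \norm{v}_\infty$.

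\textbf{Step 1: Jordan form and block scaling.} Write $A = P J P^{-1}$ where $J$ is the Jordan normal form, a block-diagonal matrix with blocks $J_i = \lambda_i I + N_i$ and $N_i$ the nilpotent superdiagonal. For a parameter $\eta > 0$, let $D_\eta = \mathrm{diag}(\eta, \eta^2, \dots, \eta^d)$ (compatibly partitioned across blocks). A standard computation gives $D_\eta^{-1} J D_\eta = \Lambda + \eta N$ where $\Lambda$ is the diagonal part and $N$ the (block) nilpotent part, so that $D_\eta^{-1} J D_\eta$ has $\lambda_i$ on the diagonal and at most $\eta$ on the superdiagonal. Taking $\norm{\cdot}_\infty$-induced operator norm (max absolute row sum) of $D_\eta^{-1} J D_\eta$ gives a bound $\rho(A) + \eta$. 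Hence, setting $S_0 = D_\eta^{-1} P^{-1}$, we get $\norm{A}_{2,S_0}$ is controlled — except one must be slightly careful: the norm $\norm{\cdot}_{2,S_0}$ is the $\ell_2$-norm after applying $S_0$, not the $\ell_\infty$-norm. I would instead observe $\norm{A}_{2,S} = \norm{S A S^{-1}}_2$ and bound the spectral ($\ell_2$) norm of the conjugated matrix. For a matrix $M$ with diagonal $\Lambda$ and small superdiagonal entries bounded by $\eta$, $\norm{M}_2 \le \rho(A) + C\eta$ for a dimension-dependent constant $C$ (e.g. via $\norm{M}_2 \le \norm{\Lambda}_2 + \eta\norm{N}_2$ and $\norm{N}_2 \le 1$ or at worst $\sqrt{d}$). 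Choosing $\eta$ small enough in terms of $\delta$ and $d$ makes this $\le \rho(A) + \delta/2$. If $A$ is diagonalizable, $N = 0$ and we can take $\eta$ irrelevant (or $\delta = 0$ works), which is why the hypothesis only requires $\delta > 0$ in the non-diagonalizable case.

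\textbf{Step 2: Enforce $\norm{v}_{2,S} \le \norm{v}_\infty$.} With $S_1 = S_0$ from Step 1 we have $\norm{A}_{2,S_1} \le \rho(A) + \delta/2$. Now rescale: set $S = c S_1$ for a scalar $c > 0$. Scaling $S$ by a constant leaves the induced matrix norm $\norm{A}_{2,S}$ unchanged (since $\norm{A}_{2,cS} = \norm{(cS) A (cS)^{-1}}_2 = \norm{S A S^{-1}}_2$), so the bound from Step 1 persists. Meanwhile $\norm{v}_{2,S} = c\norm{S_1 v}_2 \le c\,\norm{S_1}_2\,\norm{v}_2 \le c\,\norm{S_1}_2\,\sqrt{d}\,\norm{v}_\infty$, so taking $c = 1/(\sqrt{d}\,\norm{S_1}_2)$ yields $\norm{v}_{2,S} \le \norm{v}_\infty$ for all $v$. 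Actually one must double-check whether $\sqrt d$ is the right constant going from $\ell_2$ to $\ell_\infty$: $\norm{v}_2 \le \sqrt{d}\,\norm{v}_\infty$, so this is fine. This completes the construction.

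\textbf{Main obstacle.} The genuinely delicate point is reconciling the \emph{two different norms} appearing in the statement: the matrix norm $\norm{A}_{2,S}$ is $S$-weighted \emph{Euclidean}, but the domination requirement is against $\norm{\cdot}_\infty$. The naive Jordan-block trick controls the $\ell_\infty$-induced norm of the conjugate, not the $\ell_2$-induced one, so I need the extra observation that for the near-diagonal matrix $SAS^{-1}$ the spectral norm is also within $O_d(\eta)$ of $\rho(A)$ — this requires a clean bound on $\norm{N}_2$ for the nilpotent perturbation and choosing $\eta$ as a function of both $\delta$ and $d$. The second subtlety is that the rescaling in Step 2 must not destroy the norm bound from Step 1, which works only because scalar multiples of $S$ give the same induced matrix norm; this is worth stating explicitly. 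Everything else is routine linear algebra.
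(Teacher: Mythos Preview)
Your argument is correct and follows the same two-step structure as the paper: first produce some $S_1$ with $\norm{A}_{2,S_1}\le\rho(A)+\delta$, then rescale by a scalar to enforce $\norm{v}_{2,S}\le\norm{v}_\infty$, using that scalar rescaling leaves the induced operator norm unchanged. The only difference is that the paper cites Householder's Theorem~4.4 as a black box for the first step, while you unpack that construction explicitly via the Jordan form and the diagonal $D_\eta$ rescaling; Step~2 is identical in both.

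One technical point to tidy up: $A$ is real but may have complex eigenvalues, so the complex Jordan decomposition $A=PJP^{-1}$ gives a complex $P$ and hence a complex $S_0=D_\eta^{-1}P^{-1}$, whereas the paper's definition of $\norm{\cdot}_{2,S}$ requires $S\in\reals^{d\times d}$. This is easy to repair---either work with the real Jordan form (real eigenvalues give $1\times1$ blocks, conjugate pairs give $2\times2$ rotation-scaling blocks, which are normal so $\norm{\cdot}_2$ still equals the spectral radius on the block-diagonal part), or note that for real $v$ one has $\norm{S_0v}_2^2=v^\top\mathrm{Re}(S_0^*S_0)v$ with $\mathrm{Re}(S_0^*S_0)$ real symmetric positive definite, so a real Cholesky factor $\tilde S$ yields the same vector norm on $\reals^d$ and $\norm{A}_{2,\tilde S}\le\norm{A}_{2,S_0}$ since the supremum over real $v$ is no larger than over complex $v$. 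Either fix is routine, but you should state one of them.
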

\begin{proof}
    The existence of $S'$ such that $\norm{A}_{2, S'} \le \rho(A) + \delta$ is a consequence of the proof of Theorem~4.4 in \citet{householder1958approximate}. Due to equivalence of norms in finite dimensions, there exists $u > 0$ such that for any $v \in \reals^d$, $\norm{v}_{2, S'} \le u\norm{v}_\infty$. Define $S = \frac{1}{u}S'$. Consequently, $\norm{v}_{2, S} \le \norm{v}_\infty$ for any $v$. Now from Theorem 2.10 in \citet{householder1958approximate}, we have $\norm{A}_{2, S'} = \norm{S'AS'^{-1}}_2$ and $\norm{A}_{2, S} = \norm{SAS^{-1}}_2$ which means
    \begin{align*}
        \norm{A}_{2, S} = \norm{SAS^{-1}}_2 = \norm{S'AS'^{-1}}_2 = \norm{A}_{2, S'} \le  \rho(A) + \delta.
    \end{align*}
\end{proof}

\begin{lemma}
    \label{lemma:smooth-nrom}
    For any invertible matrix $S$, the function $f \colon \reals^d \to \reals$ defined as $f(x) = \frac{1}{2} \norm{x}_{2,S}^2$ is $1$-smooth w.r.t.  $\norm{\cdot}_{2,S}$.
\end{lemma}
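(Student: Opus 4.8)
The plan is to prove that $f(x) = \tfrac12\norm{x}_{2,S}^2$ is $1$-smooth with respect to $\norm{\cdot}_{2,S}$ by reducing to the well-known fact that $g(y) = \tfrac12\norm{y}_2^2$ is $1$-smooth (indeed, it satisfies the defining inequality with equality, since $g$ is a quadratic with Hessian $I$). The key observation is that $f$ is obtained from $g$ by the linear change of variables $y = Sx$, and both the function value and the norm transform consistently under this substitution.

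First I would note that $f$ is convex and differentiable: it is a positive semidefinite quadratic form $f(x) = \tfrac12 x^\top S^\top S x$, with gradient $\nabla f(x) = S^\top S x$. Then, for arbitrary $x, y \in \reals^d$, I would write the exact second-order expansion
\begin{align*}
    f(y) = f(x) + \inner{\nabla f(x), y - x} + \tfrac12 (y-x)^\top S^\top S (y - x) = f(x) + \inner{\nabla f(x), y-x} + \tfrac12 \norm{y-x}_{2,S}^2,
\end{align*}
using that $(y-x)^\top S^\top S(y-x) = \norm{S(y-x)}_2^2 = \norm{y-x}_{2,S}^2$ by the definition of $\norm{\cdot}_{2,S}$. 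This is precisely the $L$-smoothness inequality with $L = 1$ (in fact with equality), so the claim follows. Alternatively, one can phrase this as: $f = g \circ S$, so $\nabla f(x) = S^\top \nabla g(Sx)$, and plugging $Sx, Sy$ into the (equality) smoothness bound for $g$ and rewriting $\norm{Sy - Sx}_2 = \norm{y-x}_{2,S}$ gives the result.

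There is essentially no obstacle here — the statement is a routine consequence of the fact that a quadratic form with a fixed Hessian is smooth with modulus equal to the operator norm of the Hessian in the matching geometry, and the slight subtlety is merely bookkeeping to make sure the norm on the quadratic term and the norm in the smoothness definition are the same $\norm{\cdot}_{2,S}$, which they are by construction. The only thing to double-check is that the definition of $L$-smoothness in the paper requires $f$ convex and differentiable, both of which hold since $S^\top S \succeq 0$.
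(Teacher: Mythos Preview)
Your proposal is correct and essentially the same as the paper's proof. The paper writes $f = g \circ S$ with $g(y)=\tfrac12\norm{y}_2^2$, uses the chain rule $\nabla f(x)=S^\top\nabla g(Sx)$, and then invokes the $1$-smoothness of $g$ in the Euclidean norm to conclude; your direct second-order expansion of the quadratic is the same computation with the appeal to smoothness of $g$ unrolled to an equality, and you even sketch the paper's composition argument as your ``alternative'' phrasing.
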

\begin{proof}
Let $g(x) = \frac{1}{2}\norm{x}_2^2$. By definition, $f(x) = g(Sx)$. We write
\begin{align*}
    f(x) + \inner{\nabla f(x), y - x} + \frac{1}{2} \norm{y - x}^2_{2, S}  &= g(Sx) + \inner{S^\top \nabla g(Sx), y - x} + \frac{1}{2} \norm{Sy - Sx}_2^2\\
    &= g(Sx) + \inner{\nabla g(Sx), Sy - Sx} + \frac{1}{2} \norm{Sy - Sx}^2_2.
\end{align*}
    By \citet[Example 5.11]{beck2017first}, $g$ is $1$-smooth, that is for any $u, v$,
    \begin{align*}
        g(u) + \inner{\nabla g(u),v - u} + \frac{1}{2} \norm{v - u}^2_2 \ge g(v).
    \end{align*}
    Together, we conclude 
    \begin{align*}
        f(x) + \inner{\nabla f(x), y - x} + \frac{1}{2} \norm{y - x}^2_{2, S}   \ge g(Sy) = f(y).
    \end{align*}
\end{proof}

\begin{lemma}
    \label{lemma:PID-noise-linear-boundedness-vector}
    Assume a dataset $\{(x, R_{x}, X'_{x})\}_{x}$ is given, where for each state $x$ contains the random reward $R_{x} \sim \RKernel^\pi(x)$ and $X'_{x} \sim \PKernelpi(\cdot | x)$, and $\pi$ is $d$-deterministic in the environment. For an arbitrary value function $V$, define $W \colon \XX \to \reals$ as
    \begin{align*}
        W(x) \triangleq R_x + \gamma V(X'_x) - r^\pi(x) - \gamma (\PKernelpi V)(x).
    \end{align*}
    Moreover, for some $\fullV$ and $x, \texttt{f}$, let $\hat L(x, \texttt f)$ be the estimator of $(\Lp \fullV)(x, \texttt{f})$ derived in PID TD Learning's update   \eqref{eq:general-TD-update} according to the sample $(x, R_x, X'_x)$. Define 
    $$\tilde{W}(x, \texttt f) = \hat L(x, \texttt f) - (\Lp \fullV)(x, \texttt{f}).$$
    We have
    \begin{gather*}
    \E\left[\norm{W}_\infty^2\right] \leq n\left(\frac{1-d}{4} + 5\gamma^2(1-d)\norm{V}^2_\infty\right),\\
        \E\left[\norm{\tilde{W}}_\infty^2\right] \leq 3n\max((\kappa_p + \kappa_I\alpha)^2, \alpha^2)\left(\frac{1-d}{4} + 5\gamma^2(1-d)\norm{\fullV}^2_\infty\right).
    \end{gather*}
\end{lemma}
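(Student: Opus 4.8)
The plan is to reduce both inequalities to the scalar bounds already proved in Lemma~\ref{lem:PID-noise-linear-boundedness}, using only the elementary fact that for any vector $u$ in a finite-dimensional space one has $\norm{u}_\infty^2 = \max_i u_i^2 \le \sum_i u_i^2$, together with linearity of expectation. No new probabilistic input is needed; in particular the independence of the samples $(x, R_x, X'_x)$ across states will not even be used.

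For the first inequality, I would observe that for each fixed $x$ the quantity $W(x) = R_x + \gamma V(X'_x) - r^\pi(x) - \gamma(\PKernelpi V)(x) = R_x + \gamma V(X'_x) - (\Tpi V)(x)$ is exactly the random variable $W$ of Lemma~\ref{lem:PID-noise-linear-boundedness} (stated there for an arbitrary state and an arbitrary $V$), so $\E[W(x)^2] \le \tfrac{1-d}{4} + 5\gamma^2(1-d)\norm{V}_\infty^2$. Summing over the $n = \abs{\XX}$ states gives
\[
    \E\!\left[\norm{W}_\infty^2\right] \le \sum_{x \in \XX} \E\!\left[W(x)^2\right] \le n\!\left(\tfrac{1-d}{4} + 5\gamma^2(1-d)\norm{V}_\infty^2\right).
\]

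For the second inequality, I would write $V$ for the first component of $\fullV$ and set $W(x) = R_x + \gamma V(X'_x) - (\Tpi V)(x)$. The case analysis in the proof of Lemma~\ref{lem:PID-noise-linear-boundedness} shows that, for each state $x$, the estimator noise satisfies $\tilde W(x, \texttt{v}) = (\kappa_p + \kappa_I\alpha) W(x)$, $\tilde W(x, \texttt{z}) = \alpha W(x)$, and $\tilde W(x, \texttt{v'}) = 0$, hence $\tilde W(x, \texttt{f})^2 \le \max((\kappa_p + \kappa_I\alpha)^2, \alpha^2)\, W(x)^2$ for every $\texttt{f} \in \FSet$. Summing over all $3n$ coordinates of $\tilde W$, using $\norm{V}_\infty \le \norm{\fullV}_\infty$, and applying the first inequality yields
\[
    \E\!\left[\norm{\tilde W}_\infty^2\right] \le 3\max((\kappa_p + \kappa_I\alpha)^2, \alpha^2)\sum_{x \in \XX}\E\!\left[W(x)^2\right] \le 3n\max((\kappa_p + \kappa_I\alpha)^2, \alpha^2)\!\left(\tfrac{1-d}{4} + 5\gamma^2(1-d)\norm{\fullV}_\infty^2\right).
\]

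I do not expect any genuine obstacle: the argument is essentially bookkeeping. The only points that need care are (i) checking that $W(x)$ has precisely the structure covered by Lemma~\ref{lem:PID-noise-linear-boundedness}, so that $d$-determinism can be invoked state by state, and (ii) counting the coordinates of the noise vector correctly — it lives in $\reals^{3n}$, which is what produces the factor $3n$. (One could sharpen this to $n$ via $\norm{\tilde W}_\infty^2 \le \max((\kappa_p + \kappa_I\alpha)^2, \alpha^2)\norm{W}_\infty^2$ and the vanishing of the $\texttt{v'}$-component, but the stated $3n$ is all that is required in the acceleration proof.)
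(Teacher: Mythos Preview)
Your proposal is correct and matches the paper's proof essentially line for line: the paper also invokes Lemma~\ref{lem:PID-noise-linear-boundedness} coordinatewise and then passes from $\norm{\cdot}_\infty^2$ to a sum via $\E[\norm{Z}_\infty^2] \le \sum_i \E[Z_i^2]$. Your parenthetical observation that the factor $3n$ could be sharpened to $n$ is a nice extra remark not in the paper, but the core argument is identical.
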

\begin{proof}
    According to Lemma~\ref{lem:PID-noise-linear-boundedness}, for any $x , \texttt f$,
    \begin{gather*}
    \E\left[W(x)^2\right] \leq \frac{1-d}{4} + 5\gamma^2(1-d)\norm{V}^2_\infty,\\
        \E\left[\tilde{W}(x, \texttt f)^2\right] \leq \max((\kappa_p + \kappa_I\alpha)^2, \alpha^2)\left(\frac{1-d}{4} + 5\gamma^2(1-d)\norm{\fullV}^2_\infty\right).
    \end{gather*}
    The result follows from the fact that for any random vector $Z = [Z_1, \ldots, Z_k]^\top$, 
    \begin{align*}
        \E\left[\norm{Z}_\infty^2\right] \le \E\left[\sum_i Z_i^2\right] = \sum_i E\left[Z_i^2\right].
    \end{align*}
\end{proof}

\subsection{Proof of Theorem~\ref{thm:acceleration-PID-TD}}
We prove the more general result than Theorem~\ref{thm:acceleration-PID-TD} without any diagonalizablity assumptions. Theorem~\ref{thm:acceleration-PID-TD} is the special case of the following when $\deltatd = \deltapid = 0$.
\begin{theorem}
	\label{thm:acceleration-PID-TD-general}
	Suppose synchronous TD Learning and PID TD Learning are run with initial value function $V_0$ and learning rate $\mu(t) = \epsilon/(t+T)$ to evaluate policy $\pi$. Let $\Vtd_t, \Vpid_t$ be the value functions obtained with each algorithm at iteration $t$. Assume $\deltatd, \deltapid \ge 0$. If $\PKernelpi$ is not diagonalizable, we further assume $\deltatd > 0$, and if $\Ap$ is not diagonalizable, we assume $\deltapid > 0$. If $\epsilon > 2/(1-\gamma - \deltatd)$ and $T \ge \cpid_1 \epsilon / (1-\gamma - \deltatd)$, we have
 \begin{align*}
\E\left[\norm{\Vtd_t - V^\pi}_\infty^2\right] 
        &\le \cpid_2 \norm{V_0 - V^\pi}_\infty^2\left(\frac{T}{t + T}\right)^{\epsilon(1 - \gamma - \deltatd)} + \frac{  \epsilon (\cpid_3 + \cpid_4 \norm{V^\pi}_\infty^2)}{ \epsilon(1 - \gamma - \deltatd) - 1}  \cdot \left(\frac{\epsilon}{t + T}\right).
 \end{align*}
 Here, $\{\cpid_i\}$ are constants dependent on the MDP and $\deltatd$. Moreover, assume we initialize $V' = V_0, z \equiv 0$ in PID TD Learning and $\Ap$ has spectral radius $\rho < 1$. If $\epsilon > 2/(1-\rho - \deltapid)$ and $T \ge \cpid_1 \epsilon / (1-\rho - \deltapid)$, we have
 \begin{align*}
\E\left[\norm{\Vpid_t - V^\pi}_\infty^2\right] 
        &\le \cpid_2 \norm{V_0 - V^\pi}_\infty^2\left(\frac{T}{t + T}\right)^{\epsilon(1 - \rho - \deltapid)} + \frac{ \epsilon (\cpid_3  + \cpid_4  \norm{V^\pi}_\infty^2)}{ \epsilon(1 - \rho - \deltapid) - 1}  \cdot \left(\frac{\epsilon}{t + T}\right).
 \end{align*}
  Here, $\{\cpid_i\}$ are constants dependent on the MDP, controller gains, and $\deltapid$.
\end{theorem}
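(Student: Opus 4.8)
The plan is to recognize that both synchronous TD Learning and synchronous PID TD Learning are \emph{linear} stochastic approximation schemes of the form $e_{t+1} = (I + \mu_t \bar A) e_t + \mu_t w_t$ for the error $e_t$ of the iterate, and to feed this into the finite-sample framework of \citet{chen2020finite}. For TD, $e_t = V_t - \Vpi$, $\bar A = \gamma\PKernelpi - I$, and $w_t(x) = R_x + \gamma V_t(X'_x) - r^\pi(x) - \gamma(\PKernelpi V_t)(x)$; for PID TD, stacking $\fullV_t = [V_t\; z_t\; V'_t]^\top$, $e_t = \fullV_t - \fullVpi$, $\bar A = \Ap - I$, and $w_t$ is the estimation error of $\Lp\fullV_t$. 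In both cases $\E[w_t \mid \mathcal{F}_t] = 0$; the stated initialization makes $e_t$ start at $[V_0-\Vpi\;\ 0\;\ V_0-\Vpi]^\top$ in the PID case (so $\norm{e_0}_\infty = \norm{V_0-\Vpi}_\infty$); and $\bar A$ is invertible because $\rho(\gamma\PKernelpi) = \gamma < 1$ (resp.\ $\rho(\Ap) = \rho < 1$), so the fixed point is exactly $\Vpi$ (resp.\ $\fullVpi$, using $\fullVpi = \Lp\fullVpi$).

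The crux is to choose a \emph{smooth} quadratic Lyapunov function whose contraction rate is dictated by the spectral radius rather than by $\gamma$. By Lemma~\ref{lemma:approx-norm} applied to $\gamma\PKernelpi$ (TD) and to $\Ap$ (PID), there is an invertible $S$ with $\snorm{\gamma\PKernelpi}\le \gamma + \deltatd$ (resp.\ $\snorm{\Ap}\le \rho + \deltapid$) and $\snorm{v}\le\norm{v}_\infty$ for all $v$, where the perturbation can be taken to be $0$ when the matrix is diagonalizable. Since $I + \mu_t\bar A = (1-\mu_t)I + \mu_t\gamma\PKernelpi$ (resp.\ $(1-\mu_t)I + \mu_t\Ap$) and $\mu_t \le \mu(0) = \epsilon/T \le 1$ under the assumed lower bound on $T$, the triangle inequality gives $\snorm{I + \mu_t\bar A}\le 1 - \mu_t\kappa$ with $\kappa = 1-\gamma-\deltatd$ (resp.\ $\kappa = 1-\rho-\deltapid$). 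This is exactly where the acceleration enters: a good choice of gains gives $\rho < \gamma$, hence a larger $\kappa$.

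Next I would run the standard Lyapunov recursion for $g_t \triangleq \E[\tfrac12\snorm{e_t}^2]$. Lemma~\ref{lemma:smooth-nrom} says $x\mapsto\tfrac12\snorm{x}^2$ is $1$-smooth w.r.t.\ $\snorm{\cdot}$, so expanding $e_{t+1}$, taking conditional expectation, and using $\E[w_t\mid\mathcal{F}_t]=0$,
\begin{align*}
\E\!\left[\tfrac{1}{2}\snorm{e_{t+1}}^2 \,\middle|\, \mathcal{F}_t\right] \;\le\; \tfrac{1}{2}(1-\mu_t\kappa)^2\,\snorm{e_t}^2 \;+\; \tfrac{1}{2}\mu_t^2\,\E\!\left[\snorm{w_t}^2\,\middle|\,\mathcal{F}_t\right].
\end{align*}
Lemma~\ref{lemma:PID-noise-linear-boundedness-vector}, combined with $\snorm{\cdot}\le\norm{\cdot}_\infty$, $\norm{\fullV_t}_\infty\le\norm{e_t}_\infty+\norm{\Vpi}_\infty$ (using $\norm{\fullVpi}_\infty=\norm{\Vpi}_\infty$), and norm equivalence $\norm{\cdot}_\infty\le C\snorm{\cdot}$, bounds $\E[\snorm{w_t}^2\mid\mathcal{F}_t]\le A_1\snorm{e_t}^2 + A_2$ with $A_1,A_2$ constants depending only on the MDP, the gains, and $\delta$ (the $\norm{\Vpi}_\infty^2$ term sitting inside $A_2$). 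Hence $g_{t+1}\le\big((1-\mu_t\kappa)^2+\mu_t^2A_1\big)g_t+\tfrac12\mu_t^2A_2$, and choosing the constant $c_1$ in $T\ge c_1\epsilon/\kappa$ large enough that $\mu(0)(\kappa^2+A_1)\le\kappa$ collapses this to the clean one-step bound $g_{t+1}\le(1-\mu_t\kappa)g_t+\tfrac12\mu_t^2A_2$.

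Finally I would unroll this recursion with $\mu_t=\epsilon/(t+T)$: the homogeneous factor satisfies $\prod_{s=0}^{t-1}\big(1-\epsilon\kappa/(s+T)\big)\le(T/(t+T))^{\epsilon\kappa}$, while the accumulated-noise term obeys $\sum_{s=0}^{t-1}\big[\prod_{j=s+1}^{t-1}(1-\mu_j\kappa)\big]\tfrac12\mu_s^2A_2\;\lesssim\; \epsilon^2A_2\,(t+T)^{-\epsilon\kappa}\sum_{s=0}^{t-1}(s+T)^{\epsilon\kappa-2}\;\lesssim\; \epsilon^2A_2/\big((\epsilon\kappa-1)(t+T)\big)$, the last step being an integral comparison that uses $\epsilon\kappa>2$ — exactly the role of the hypothesis $\epsilon>2/\kappa$. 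Converting back via $\E[\norm{e_t}_\infty^2]\le C^2\,\E[\snorm{e_t}^2]=2C^2g_t$ and $\snorm{e_0}\le\norm{e_0}_\infty=\norm{V_0-\Vpi}_\infty$, and absorbing $C^2$ and $A_1,A_2$ into the advertised constants $\{\ctd_i\}$ (resp.\ $\{\cpid_i\}$), gives the two displayed bounds with $\kappa=1-\gamma-\deltatd$ and $\kappa=1-\rho-\deltapid$ respectively; setting $\deltatd=\deltapid=0$ when $\PKernelpi$ and $\Ap$ are diagonalizable recovers Theorem~\ref{thm:acceleration-PID-TD}. The main obstacle is the second paragraph — producing one norm that is simultaneously smooth (so the one-step Lyapunov inequality closes) and contracts at a rate governed by $\rho$ (so the acceleration is visible), while keeping the noise bound affine in that same norm; once this norm is available, the rest is the routine recursion bookkeeping of \citet{chen2020finite}.
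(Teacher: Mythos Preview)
Your proposal is correct and follows essentially the same route as the paper: construct the weighted Euclidean norm $\snorm{\cdot}$ via Lemma~\ref{lemma:approx-norm} so that $\Tpi$ (resp.\ $\Lp$) is a $(\gamma+\deltatd)$- (resp.\ $(\rho+\deltapid)$-) contraction, observe that $\tfrac12\snorm{\cdot}^2$ is $1$-smooth (Lemma~\ref{lemma:smooth-nrom}), bound the noise via Lemma~\ref{lemma:PID-noise-linear-boundedness-vector}, and convert back to $\norm{\cdot}_\infty$ by norm equivalence. The only difference is packaging: the paper invokes Corollary~2.1.2 of \citet{chen2020finite} as a black box once the contraction norm, smoothness, and affine noise bound are in place, whereas you explicitly write out the one-step Lyapunov inequality and unroll the recursion yourself---that is precisely the computation behind Chen's corollary, so the two arguments coincide.
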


\subsubsection*{Proof of Theorem~\ref{thm:acceleration-PID-TD-general} for TD Learning}

\begin{proof}
     Since $\PKernelpi$ is a stochastic matrix, we have $\rho(\PKernelpi) = 1$. Based on Lemma~\ref{lemma:approx-norm}, let $S$ be the non-singular matrix such that $\norm{\gamma \PKernelpi}_{2, S} \le \gamma + \deltatd$. For any two $V_1$ and $V_2$ we have
    \begin{align*}
        \snorm{T^\pi V_1 - T^\pi V_2} = \snorm{\gamma \PKernelpi (V_1 - V_2)} \le \snorm{\gamma \PKernelpi} \snorm{V_1 - V_2} \le  (\gamma + \deltatd) \snorm{V_1 - V_2},
    \end{align*}
    which means $T^\pi$ is a $(\gamma + \deltatd)$-contraction w.r.t. $\norm{\cdot}_{2, S}$. Moreover, $\frac{1}{2}\snorm{x}^2$ is $1$-smooth according to Lemma~\ref{lemma:smooth-nrom}. Consequently, we use $\norm{\cdot}_{2, S}$ as the norms $\norm{\cdot}_c$ and $\norm{\cdot}_s$ in \citet{chen2020finite}.

    \newcommand{\TD}[1]{V_{#1}^{(\text{TD})}}

    Assume the policy is $d$-deterministic. Define the noise $W_t \colon \XX \to \reals$ at iteration $t$ as
    \begin{align*}
        W_t(x) \triangleq R_{x, t} + \gamma V_t(X'_{x,t}) - \rpi(x) - \gamma (\PKernelpi V_t)(x).
    \end{align*}
    By Lemma~\ref{lemma:PID-noise-linear-boundedness-vector}, we can bound the conditional noise at each iteration as
     $$\E\left[\norm{W_t}_\infty^2 \middle | V_0, W_0, \ldots , W_{t-1}, V_t \right] \leq \Ctd + \Btd\norm{V_t}^2_\infty,$$
     where
     \begin{align*}
         \Ctd \triangleq \frac{n(1-d)}{4}, \qquad
         \Btd \triangleq 5\gamma^2n  (1-d).
     \end{align*}

    There exists a constant $\ltd > 0$ by the equivalence of norms and Lemma~\ref{lemma:approx-norm} such that for all $x\in \mathbb{R}^{n}$:
    \begin{equation}
        \label{eq:equivalence-of-norms-lem-acceleration-TD}
        \ltd\norm{x}_\infty \leq \snorm{x} \leq \norm{x}_\infty.
    \end{equation}
    Based on these, define the constant
    \begin{equation*}
        \ctd_1 \triangleq \frac{8 (\Btd  + 2)}{\ltd^2}.
    \end{equation*}
    
    By Corollary 2.1.2 of \citet{chen2020finite}, choosing the norm $\norm{\cdot}_e$ to be $\norm{\cdot}_\infty$, $\mu = L = 1$, the error at iteration $t$ is bounded as
    \begin{align*}
        \E\left[\snorm{V_t - V^\pi}^2\right] &\leq \snorm{V_0 - V^\pi}^2\left(\frac{T}{t + T}\right)^{\epsilon(1 - \gamma - \deltatd)} + \frac{16e \epsilon^2\left(\Ctd + 2\Btd \snorm{V^\pi}^2\right)}{\ltd^2 (\epsilon(1 - \gamma - \deltatd) - 1)}  \cdot \left(\frac{1}{t + T}\right).
    \end{align*}

    By Equation~\eqref{eq:equivalence-of-norms-lem-acceleration-TD}, this immediately gives us a bound on the infinity norm.
    \begin{align*}
        \E\left[\norm{V_t - V^\pi}_\infty^2\right] 
        &\leq \frac{1}{\ltd^2}  \E\left[\snorm{V_t - V^\pi}^2\right]\\
        &\leq \frac{1}{\ltd^2}  \snorm{V_0 - V^\pi}^2\left(\frac{T}{t + T}\right)^{\epsilon(1 - \gamma - \deltatd)}+ \frac{16e \epsilon^2\left(\Ctd + 2\Btd \snorm{V^\pi}^2\right)}{\ltd^4 (\epsilon(1 - \gamma - \deltatd) - 1)}  \cdot \left(\frac{1}{t + T}\right)\\
        &\leq \frac{1}{\ltd^2}  \norm{V_0 - V^\pi}_\infty^2\left(\frac{T}{t + T}\right)^{\epsilon(1 - \gamma - \deltatd)}+ \frac{16e \epsilon^2\left(\Ctd + 2\Btd \norm{V^\pi}_\infty^2\right)}{\ltd^4 (\epsilon(1 - \gamma - \deltatd) - 1)}  \cdot \left(\frac{1}{t + T}\right).
    \end{align*}
    This gives the statement of theorem by defining
    \begin{gather*}
        \ctd_2 \triangleq \frac{1}{\ltd^2}, \quad
        \ctd_3 \triangleq \frac{16e\Ctd}{\ltd^4}, \quad
        \ctd_4 \triangleq \frac{32e\Btd}{\ltd^4}.
    \end{gather*}
        
\end{proof}

\subsubsection*{Proof of Theorem~\ref{thm:acceleration-PID-TD-general} for PID TD Learning}
\begin{proof}
    The proof follows the exact steps in the proof for TD Learning. Based on Lemma~\ref{lemma:approx-norm}, let $S$ be the non-singular matrix such that $\norm{\Ap}_{2, S} \le \rho + \deltapid$. For any two $\fullV_1$ and $\fullV_2$ we have
    \begin{align*}
        \snorm{\Lp \fullV_1 - \Lp \fullV_2} = \snorm{\Ap (\fullV_1 - \fullV_2)} \le \snorm{\Ap} \snorm{\fullV_1 - \fullV_2} \le  (\rho + \deltapid) \snorm{\fullV_1 - \fullV_2},
    \end{align*}
    which means $\Lp$ is a $(\rho + \deltapid)$-contraction w.r.t. $\norm{\cdot}_{2, S}$. Moreover, $\frac{1}{2}\snorm{x}^2$ is $1$-smooth according to Lemma~\ref{lemma:smooth-nrom}. Consequently, we use $\norm{\cdot}_{2, S}$ as the norms $\norm{\cdot}_c$ and $\norm{\cdot}_s$ in \citet{chen2020finite}.

    \newcommand{\TD}[1]{V_{#1}^{(\text{TD})}}

    Assume the policy is $d$-deterministic. Define the noise $\tilde W_t \colon \XX \times \{\tv, \tz, \tvp\}\to \reals$ at iteration $t$ as
    \begin{align*}
        \tilde W_t(x, \tv) &\triangleq (\kappa_p + \kappa_I \alpha)(R_{x, t} + \gamma V_t(X'_{x,t}) - \rpi(x) - \gamma (\PKernelpi V_t)(x)),\\
        \tilde W_t(x, \tz) &\triangleq \alpha(R_{x, t} + \gamma V_t(X'_{x,t}) - \rpi(x) - \gamma (\PKernelpi V_t)(x)),\\
        \tilde W_t(x, \tvp) &\triangleq 0.
    \end{align*} 
    By Lemma~\ref{lemma:PID-noise-linear-boundedness-vector}, we can bound the noise at each iteration as
     $$\E\left[\norm{\tilde{W}_t}_\infty^2 \middle | \tilde V_0, \tilde W_0, \ldots , \tilde W_{t-1}, \tilde V_t \right] \leq \Cpid + \Bpid\norm{\fullV_t}^2_\infty,$$
     where
     \begin{align*}
         \Cpid \triangleq \frac{3n}{4}\max((\kappa_p + \kappa_I\alpha)^2, \alpha^2)(1-d), \qquad
         \Bpid \triangleq 15\gamma^2n \cdot \max((\kappa_p + \kappa_I\alpha)^2, \alpha^2)(1-d).
     \end{align*}

    There exists a constant $l > 0$ by the equivalence of norms and Lemma~\ref{lemma:approx-norm} such that for all $x\in \mathbb{R}^{3n}$:
    \begin{equation}
        \label{eq:equivalence-of-norms-lem-acceleration-PID-TD}
        \lpid\norm{x}_\infty \leq \snorm{x} \leq \norm{x}_\infty.
    \end{equation}
    Based on these, define the constants
    \begin{equation*}
        \cpid_1 \triangleq \frac{8 (\Bpid  + 2)}{\lpid^2}.
    \end{equation*}
    
    By Corollary 2.1.2 of \citet{chen2020finite}, choosing the norm $\norm{\cdot}_e$ to be $\norm{\cdot}_\infty$, the error at iteration $t$ is bounded as
    \begin{align*}
        \E\left[\snorm{\fullV_t - \fullV^\pi}^2\right] &\leq \snorm{\fullV_0 - \fullV^\pi}^2\left(\frac{T}{t + T}\right)^{\epsilon(1 - \rho - \deltapid)} + \frac{16e \epsilon^2\left(\Cpid + 2\Bpid \snorm{\fullV^\pi}^2\right)}{\lpid^2 (\epsilon(1 - \rho - \deltapid) - 1)}  \cdot \left(\frac{1}{t + T}\right).
    \end{align*}

    By Equation~\eqref{eq:equivalence-of-norms-lem-acceleration-PID-TD}, this immediately gives us a bound on the infinity norm.
    \begin{align*}
        \E\left[\norm{\fullV_t - \fullV^\pi}_\infty^2\right] 
        &\leq \frac{1}{\lpid^2}  \E\left[\snorm{\fullV_t - \fullV^\pi}^2\right]\\
        &\leq \frac{1}{\lpid^2}  \snorm{\fullV_0 - \fullV^\pi}^2\left(\frac{T}{t + T}\right)^{\epsilon(1 - \rho - \deltapid)}+ \frac{16e \epsilon^2\left(\Cpid + 2\Bpid \snorm{\fullV^\pi}^2\right)}{\lpid^4 (\epsilon(1 - \rho - \deltapid) - 1)}  \cdot \left(\frac{1}{t + T}\right)\\
        &\leq \frac{1}{\lpid^2}  \norm{\fullV_0 - \fullV^\pi}_\infty^2\left(\frac{T}{t + T}\right)^{\epsilon(1 - \rho - \deltapid)}+ \frac{16e \epsilon^2\left(\Cpid + 2\Bpid \norm{\fullV^\pi}_\infty^2\right)}{\lpid^4 (\epsilon(1 - \rho - \deltapid) - 1)}  \cdot \left(\frac{1}{t + T}\right).
    \end{align*}
    Since $\norm{\Vpid_t - V^\pi}_\infty \leq \norm{\fullV_t - \fullV^\pi}_\infty$, $\norm{\fullV^\pi}_\infty = \norm{V^\pi}_\infty$, and $\norm{\fullV_0 - \fullV^\pi}_\infty = \norm{V_0 - V^\pi}_\infty$ 
    we finally get
    \begin{align*}
        \E\left[\norm{\Vpid_t - V^\pi}_\infty^2\right] 
        &\le \frac{1}{\lpid^2}  \norm{V_0 - V^\pi}_\infty^2\left(\frac{T}{t + T}\right)^{\epsilon(1 - \rho - \deltapid)}  + \frac{16e \epsilon^2\left(\Cpid + 2\Bpid \norm{V^\pi}_\infty^2\right)}{\lpid^4 (\epsilon(1 - \rho - \deltapid) - 1)}  \cdot \left(\frac{1}{t + T}\right).
    \end{align*}
    This gives the statement of theorem by defining
    \begin{gather*}
        \cpid_2 \triangleq \frac{1}{\lpid^2}, \quad
        \cpid_3 \triangleq \frac{16e\Cpid}{\lpid^4}, \quad
        \cpid_4 \triangleq \frac{32e\Bpid}{\lpid^4}.
    \end{gather*}

\end{proof}

\subsection{Proof of Proposition~\ref{prop:error-terms-ratio}}
\begin{proof}
\begin{align*}
    \frac{\Eopttd(0)}{\Estattd(0)}
    &= \frac{ \norm{V_0 - V^\pi}_\infty^2}{\ltd^2} \cdot
    \frac{\ltd^4 (\epsilon(1 - \gamma) - 1)T}{16e \epsilon^2\left(\Ctd + 2\Btd \norm{V^\pi}_\infty^2\right)} 
\end{align*}
Due to the theorem conditions $T \ge \epsilon\ctd_1 / (1-\gamma)$ and $\epsilon \ge 2/(1-\gamma)$, which means $\epsilon(1-\gamma) - 1 \ge \epsilon(1-\gamma)/2$. We continue
\begin{align*}
    \frac{\Eopttd(0)}{\Estattd(0)}
    &\ge \frac{ \norm{V_0 - V^\pi}_\infty^2}{\ltd^2} \cdot
    \frac{\ltd^4 \epsilon(1 - \gamma)\cdot \epsilon \ctd_1}{32e \epsilon^2\left(\Ctd + 2\Btd \norm{V^\pi}_\infty^2\right)(1-\gamma)} \\
    &= \frac{\norm{V_0 - V^\pi}_\infty^2 \ltd^2  \ctd_1}{32e \left(\Ctd + 2\Btd \norm{V^\pi}_\infty^2\right)} \\
    &= \frac{\norm{V_0 - V^\pi}_\infty^2 \cdot 8 (5\gamma^2n  (1-d)  + 2) }{32e \left(n(1-d)/4 + 10\gamma^2n  (1-d) \norm{V^\pi}_\infty^2\right)} \\
    &= \frac{\norm{V_0 - V^\pi}_\infty^2 (5\gamma^2n  (1-d)  + 2)  }{e n  (1-d)\left(1 + 40\gamma^2 \norm{V^\pi}_\infty^2\right)}.
\end{align*}
Similarly for PID TD Learning, define $c =  \max((\kappa_p + \kappa_I\alpha)^2, \alpha^2)$:
\begin{align*}
    \frac{\Eoptpid(0)}{\Estatpid(0)}
    &= \frac{ \norm{V_0 - V^\pi}_\infty^2}{\lpid^2} \cdot
    \frac{\lpid^4 (\epsilon(1 - \rho) - 1)T}{16e \epsilon^2\left(\Cpid + 2\Bpid \norm{V^\pi}_\infty^2\right)}\\
    &\ge \frac{ \norm{V_0 - V^\pi}_\infty^2}{\lpid^2} \cdot
    \frac{\lpid^4 \epsilon(1 - \rho)\cdot \epsilon \cpid_1}{32e \epsilon^2\left(\Cpid + 2\Bpid \norm{V^\pi}_\infty^2\right)(1-\rho)} \\
    &= \frac{\norm{V_0 - V^\pi}_\infty^2 \lpid^2  \cpid_1}{32e \left(\Cpid + 2\Bpid \norm{V^\pi}_\infty^2\right)} \\
    &= \frac{\norm{V_0 - V^\pi}_\infty^2 \cdot 8 (15\gamma^2n c (1-d) + 2) }{32e \left(3nc(1-d)/4 + 30\gamma^2n c(1-d) \norm{V^\pi}_\infty^2\right)} \\
    &= \frac{\norm{V_0 - V^\pi}_\infty^2 (15\gamma^2nc  (1-d)  + 2)  }{3e nc  (1-d)\left(1 + 40\gamma^2 \norm{V^\pi}_\infty^2\right)}.
\end{align*}
\end{proof}


\section{Details of Gain Adaptation}
\label{sec:appendix-ga}
Tables~\ref{tab:semi-gradients-PE} and~\ref{tab:semi-gradients-Control} show the semi-gradients used for gain adaptation for Policy Evaluation and Control respectively. Algorithms~\ref{alg:PID-ALG} and~\ref{alg:PID-Q-learning-ALG} show the detailed description of the algorithm for Policy Evaluation and Control respectively.
\begin{table}[!h]
    \centering
    \caption{Semi-gradients of the Bellman residual used in the gain adaptation updates for Policy Evaluation. The learning rates are dropped to absorb them into $\eta$.}
    \bgroup
    \def\arraystretch{1.25}
    \begin{tabular}{|c|l|}
        \hline
        &  Estimated semi-gradient of $(\BRpi V_{t+1})(X_t)^2$\\
        \hline
        $\kappa_p$ & $(R_t + \gamma V_{t+1}(X'_t) - V_{t+1}(X_t)) \cdot (R_t + \gamma V_{t}(X'_t) - V_{t}(X_t))$ \\
        $\kappa_I$ & $(R_t + \gamma V_{t+1}(X'_t) - V_{t+1}(X_t)) \cdot [\beta z_{t}(X_t) + \alpha (R_t + \gamma V_{t}(X'_t) - V_{t}(X_t)) ]$\\
        $\kappa_d$ & $(R_t + \gamma V_{t+1}(X'_t) - V_{t+1}(X_t)) \cdot (V_{t}(X_t) - V_{t}'(X_t))$ \\
        \hline
    \end{tabular}
    \egroup
    \label{tab:semi-gradients-PE}
\end{table}

\begin{table}[!h]
    \centering
    \caption{Semi-gradients of the Bellman residual used in the gain adaptation updates for Control. The learning rates are dropped to absorb them into $\eta$.}
    \bgroup
    \def\arraystretch{1.25}
    \begin{tabular}{|c|l|}
        \hline
        &  Estimated semi-gradient of $(\BRstar Q_{t+1})(X_t, A_t)^2$\\
        \hline
        $\kappa_p$ & $(R_t + \gamma \max_{a \in \mathcal{A}} Q_{t+1}(X'_t, a) - Q_{t+1}(X_t, A_t)) \cdot (R_t + \gamma Q_{t}(X'_t, A'_t) - Q_{t}(X_t, A_t))$ \\
        $\kappa_I$ & $(R_t + \gamma \max_{a \in \mathcal{A}} Q_{t+1}(X'_t, a) - Q_{t+1}(X_t, A_t)) \cdot [\beta z_{t}(X_t, A_t) + \alpha (R_t + \gamma \max_{a \in \mathcal{A}} Q_{t}(X'_t, a) - Q_{t}(X_t, A_t)) ]$\\
        $\kappa_d$ & $(R_t + \gamma \max_{a \in \mathcal{A}} Q_{t+1}(X'_t, a) - Q_{t+1}(X_t, A_t)) \cdot (Q_{t}(X_t, A_t) - Q_{t}'(X_t, A_t))$ \\
        \hline
    \end{tabular}
    \egroup
    \label{tab:semi-gradients-Control}
\end{table}

\begin{algorithm}[!h]
	\caption{PID TD Learning with Gain Adaptation}
	\label{alg:PID-ALG}
	\begin{algorithmic}[1]
		\State Initialize $V_1$, $V'_1$, $z_1$, $\texttt{previous\_V}_1$, $\texttt{running\_BR}_1$, and $N_1$ to zero on all states.
		\State Initialize the gains to $\kappa_p = 1$, $\kappa_I = 0$, $\kappa_d = 0$.
		\For{$t = 1,\dots, K$}
		\State Observe state $X_t$, take action $A_t \sim \pi(\cdot \mid X_t)$, receive reward $R_t$, and observe next state $X_t'$.
        \State Set $\delta' \gets R_t + \gamma\cdot \texttt{previous\_V}_t(X_t') - \texttt{previous\_V}_t(X_t)$.
		\State Set $\delta \gets R_t + \gamma V_t(X_t') - V_t(X_t)$.
        \State Update the gains:
		\begin{align*}
		&\kappa_p \gets \kappa_p + \eta \frac{\delta\delta'}{\texttt{running\_BR}_t(X_t) + \epsilon}\\
		&\kappa_I \gets \kappa_I + \eta \frac{\delta(\beta z_t(X_t) + \alpha\delta')}{\texttt{running\_BR}_t(X_t) + \epsilon} \\
		&\kappa_d \gets \kappa_d + \eta \frac{\delta(V_t(X_t) - V'_t(X_t))}{\texttt{running\_BR}_t(X_t) + \epsilon}.
		\end{align*}
		\State Set $\texttt{update} \gets V_t(X_t) + \kappa_p \delta + \kappa_d(V_t(X_t) - V'_t(X_t)) + \kappa_I(\beta z_t(X_t) + \alpha\delta)$.
        \State Set $N_{t + 1}(X_t) \gets N_t(X_t) + 1$.
		\State Update the running values on the new states asynchronously:
		\begin{align*}
		&\texttt{running\_BR}_{t + 1}(X_t) \gets (1 - \lambda)\cdot \texttt{running\_BR}_t(X_t) + \lambda \delta^2 \\
        &\texttt{previous\_V}_{t + 1}(X_t) \gets V_t(X_t) \\
  		&V_{t + 1}(X_t) \gets (1 - \mu(N_t(X_t)))V_t(X_t) + \mu(N_t(X_t)) \cdot \texttt{update}\\
	    &V'_{t + 1}(X_t) \gets (1 - \mu(N_t(X_t)))V'_t(X_t) + \mu(N_t(X_t)) V_t(X_t) \\
		&z_{t + 1}(X_t) \gets (1 - \mu(N_t(X_t))) z_t(X_t) +\mu(N_t(X_t))(\beta z_t(X_t) + \alpha\delta).
		\end{align*}
		\EndFor
	\end{algorithmic}
\end{algorithm}

\begin{algorithm}[!h]
	\caption{PID Q-Learning with Gain Adaptation}
	\label{alg:PID-Q-learning-ALG}
	\begin{algorithmic}[1]
		\State Initialize $Q_1$, $Q'_1$, $z_1$, $\texttt{previous\_Q}_1$, $\texttt{running\_BR}_1$ to zero on all state-action pairs, and $N_1$ to zero on all states.
		\State Initialize the gains to $\kappa_p = 1$, $\kappa_I = 0$, $\kappa_d = 0$.
		\For{$t = 1,\dots, K$}
        \State Let $\pi$ be the policy derived from $Q_t$.
        \State Observe state $X_t$, take action $A_t \sim \pi(\cdot \mid X_t, A_t)$, receive reward $R_t$, and observe next state $X_t'$. Let $A_t' \gets \max_a Q_t(X_t', a)$.
        \State Set $\delta' \gets R_t + \gamma \cdot \texttt{previous\_Q}_t(X_t', A_t') - \texttt{previous\_Q}_t(X_t, A_t)$.
		\State Set $\delta \gets R_t + \gamma Q_t(X_t', A_t') - Q_t(X_t, A_t)$.
        \State Update the gains:
		\begin{align*}
		&\kappa_p \gets \kappa_p + \eta \frac{\delta\delta'}{\texttt{running\_BR}_t(X_t, A_t) + \epsilon} \\
		&\kappa_I \gets \kappa_I + \eta \frac{\delta(\beta z_t(X_t, A_t) + \alpha\delta')}{\texttt{running\_BR}_t(X_t, A_t) + \epsilon} \\
		&\kappa_d \gets \kappa_d + \eta \frac{\delta(Q_t(X_t, A_t) - Q'_t(X_t, A_t))}{\texttt{running\_BR}_t(X_t, A_t) + \epsilon}.
		\end{align*}
		\State Set $\texttt{update} \gets Q_t(X_t, A_t) + \kappa_p \delta + \kappa_d(Q_t(X_t, A_t) - Q'_t(X_t, A_t)) + \kappa_I(\beta z_t(X_t, A_t) + \alpha\delta)$.
        \State Set $N_t(X_t) \gets N_{t - 1}(X_t) + 1$.
		\State Update the running values on the new state-action pair asynchronously:
		\begin{align*}
		&\texttt{running\_BR}_{t + 1}(X_t, A_t) \gets (1 - \lambda)\cdot \texttt{running\_BR}_t(X_t, A_t) + \lambda \delta^2 \\
        &\texttt{previous\_Q}_{t + 1}(X_t, A_t) \gets Q_t(X_t, A_t) \\
        &Q_{t + 1}(X_t, A_t) \gets (1 - \mu(N_t(X_t)))Q_t(X_t, A_t) + \mu(N_t(X_t)) \cdot \texttt{update}\\
  		&Q'_{t + 1}(X_t, A_t) \gets (1 - \mu(N_t(X_t)))Q'_t(X_t, A_t) + \mu(N_t(X_t)) Q_t(X_t, A_t) \\
		&z_{t + 1}(X_t, A_t) \gets (1 - \mu(N_t(X_t))) z_t(X_t, A_t) +\mu(N_t(X_t))(\beta z_t(X_t, A_t) + \alpha\delta).
		\end{align*}
		\EndFor
	\end{algorithmic}
\end{algorithm}

\section{Description of the Environments}
\label{sec:appendix-experiment-description}

\subsection{Chain Walk}
The environment consists of 50 states that are connected in a circular chain. The agent has two actions available, moving left or right. Upon taking an action, the agent succeeds with probability 0.7, stays in place with probability 0.1, and moves in the opposite direction with probability 0.2. The agent receives a reward of 1 when entering state 10, a reward of -1 when entering state 40, and a reward of 0 otherwise.
The policy evaluated in the PE experiments is to always move left.

\subsection{Cliff Walk}
\begin{figure}[ht]
    \centering
    \includegraphics[width=0.4\textwidth]{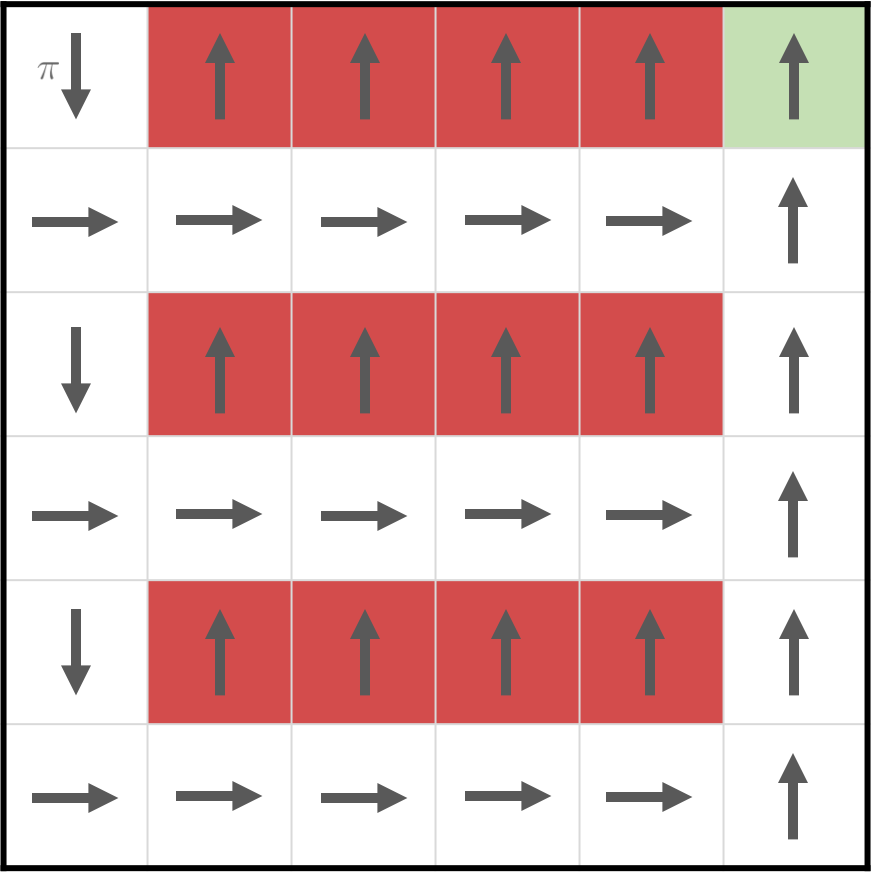}
    \caption{A visualization of Cliff Walk, taken from \cite{rakhsha2022operator}. The arrows depict the optimal policy.}
    \label{fig:gridworld}
\end{figure}
A 6 by 6 grid world is used, visualized in Figure \ref{fig:gridworld}. The agent starts on the top left. Its goal is to end up on the top right. There are 12 cliff tiles, and the agent is stuck in them if it falls in. Moreover, the agent is stuck in the goal state once entering it. Upon making a move in the goal state, it receives a reward of 20. Making a move in a cliff receives a reward of -32, -16, or -8 depending on whether the cliff is on the top, middle, or bottom respectively. Otherwise, it receives a reward of -1.
The agent has four possible actions corresponding to moving up, down, left, and right. If the agent attempts to move off the grid, it simply stays in place. Otherwise, its action succeeds with probability 0.9, and moves in one of the other three directions at random with uniform probability
The policy evaluated in the TD experiments is a random walk.

\subsection{Garnet}
The environment is randomly generated. They consist of 50 states, and 3 actions per state.
To build the environment, for each action and state, $(x, a)$, we pick 5 other random states $\XX_{x, a}$. For 10 randomly chosen states $x$, we set $r(x)$ from a uniform distribution between 0 and 1. We set $r(x)$ is zero on all other states. Then, when taking action $a$ and from state $x$, we receive reward $r(x)$ and move to any state in $\XX_{x, a}$ with equal probability.
The policy evaluated in the TD experiments is a random walk.

\section{Additional Experiments}

Figure~\ref{fig:chainwalk-adapt} shows the performance of gain adaptation on Chain Walk when $\gamma=0.99$, and the corresponding movement of the controller gains. Figure~\ref{fig:cliffwalk-adapt-control} shows the performance of gain adaptation on Cliff Walk when $\gamma=0.99$, and the corresponding movement of the controller gains.

\begin{figure}[h]
    \centering
    \begin{subfigure}{0.48\linewidth}
        \includegraphics[width=\linewidth]{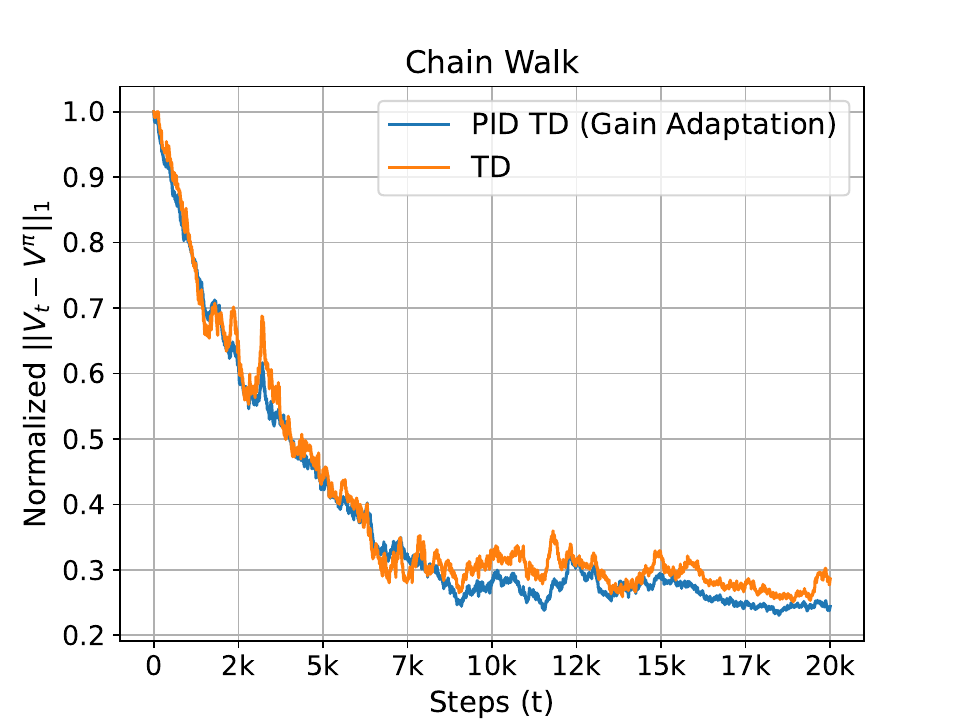}
    \end{subfigure}
    \hfill
    \begin{subfigure}{0.48\linewidth}
        \includegraphics[width=\linewidth]{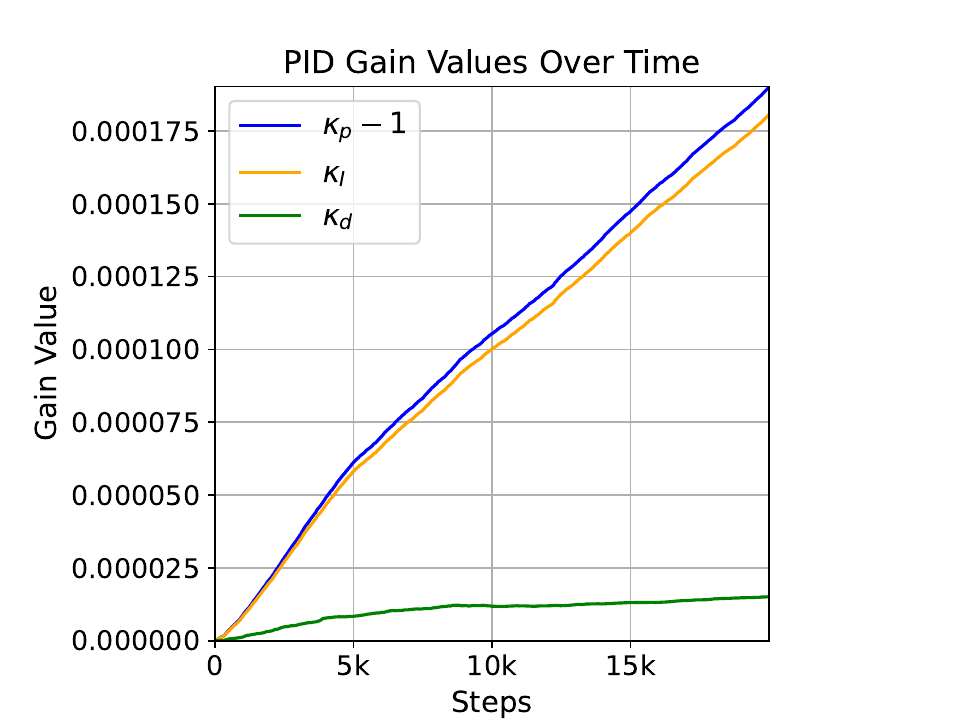}
    \end{subfigure}
    \caption{PID TD Learning with Gain Adaptation in Chain Walk with $\gamma = 0.99$. \textit{(Left)} Comparison of value errors of PID TD Learning with TD Learning. Each curve is averaged over 80 runs. Shaded area shows standard error. \textit{(Right)} The change of gains done by Gain Adaptation through training.}
    \label{fig:chainwalk-adapt}
\end{figure}

\begin{figure}[h]
    \centering
    \begin{subfigure}{0.48\linewidth}
        \includegraphics[width=\linewidth]{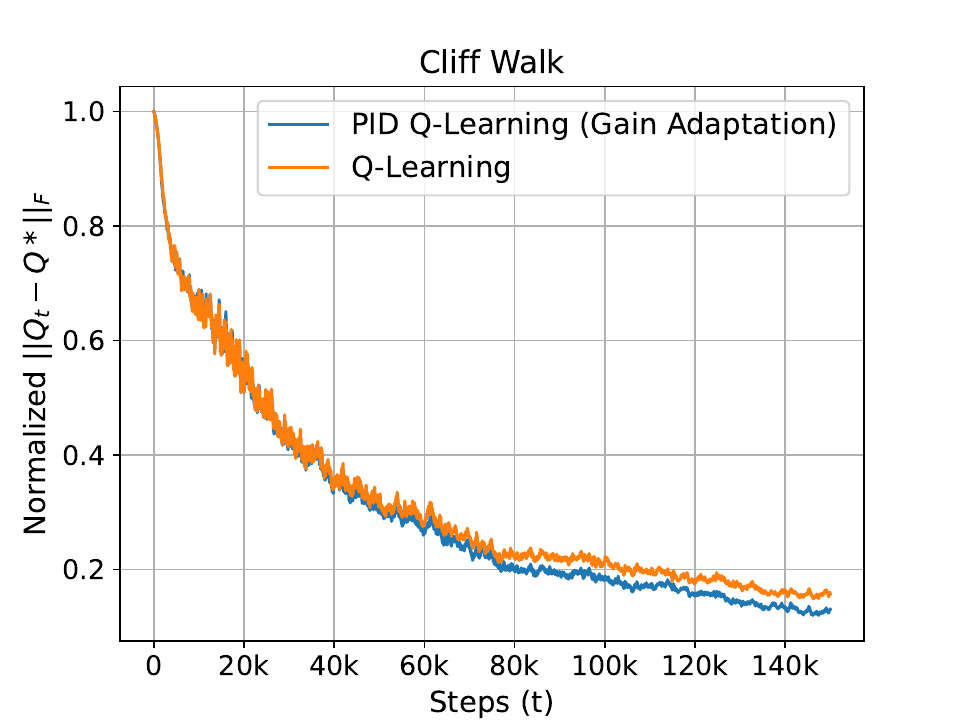}
    \end{subfigure}
    \hfill
    \begin{subfigure}{0.48\linewidth}
        \includegraphics[width=\linewidth]{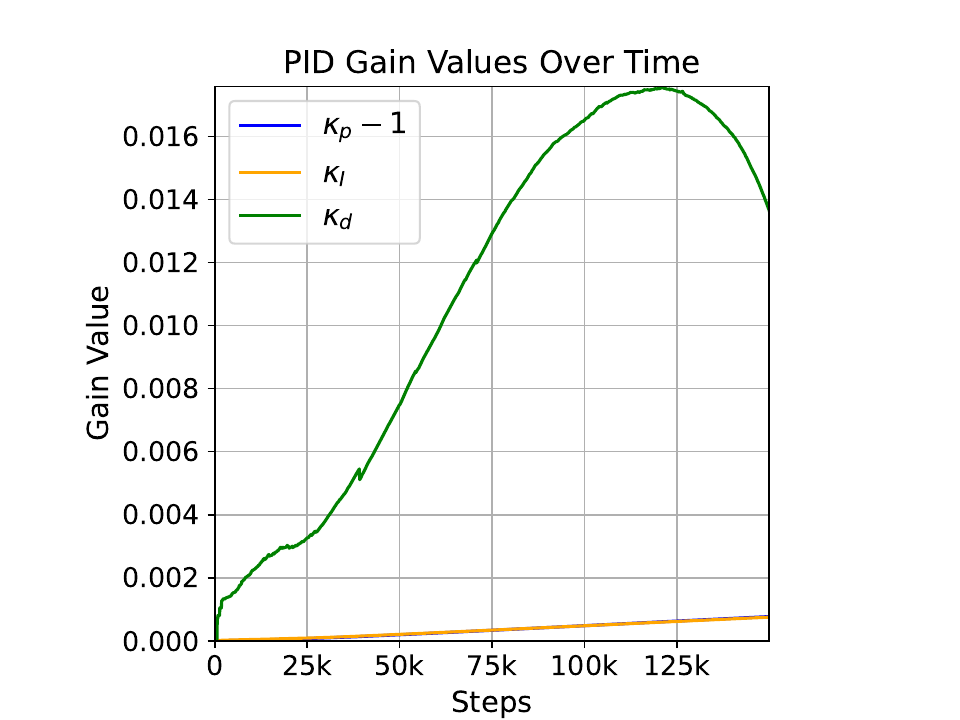}
    \end{subfigure}
    \caption{Q-Learning with Gain Adaptation in Cliff Walk with $\gamma = 0.99$. \textit{(Left)} Comparison of value errors of PID Q-Learning with Q-Learning. Each curve is averaged over 80 runs. Shaded area shows standard error. \textit{(Right)} The change of gains done by Gain Adaptation through training.}
    \label{fig:cliffwalk-adapt-control}
\end{figure}

\section{Details of Experimental Setup}
\label{sec:appendix-learning-rates}

We pick the hyperparameters such that a normalized error of $0.2$ is achieved the fastest, and if this error is not achieved, the final error is minimized.
We fix $\alpha=0.05$, $\beta=0.95$, and $\lambda=0.5$ throughout all the experiments.

For the Garnet (PE) experiments in Figure~\ref{fig:garnet-plot}, we perform a grid search on $\eta \in \{0.1, 0.01, 0.001, 0.0001\}$, $\epsilon \in \{0.1, 0.01\}$. Similarly, for the Garnet (Control) experiments, we use $\epsilon = 0.1$ and perform a grid search over $\eta \in \{10^{-5}, 5 \times 10^{-5}, 10^{-6}\}$. The learning rates we perform a grid search over in these tests is listed in Table~\ref{tab:learning-rates}. The grid search is separately performed for each instance of the sampled Garnet. For TD Learning and Q-learning, the rates searched over are the same as that of the P component in Table~\ref{tab:learning-rates}.
On each randomly generated Garnet environment, 80 runs are performed and the average trajectory is found. The variation of this average trajectory among all the 80 Garnet environments is shaded in Figure~\ref{fig:garnet-plot}.

For the Cliff Walk policy evaluation experiments in Figure~\ref{fig:cliffwalk-adapt-gains}, we set $\eta = 10^{-5}$ and $\epsilon = 10^{-1}$. For the Chain Walk (Control) experiments in Figure~\ref{fig:chainwalk-adapt-gains}, we set $\eta = 4 \times 10^{-8}$ and $\epsilon = 10^{-4}$. For the Chain Walk (PE) experiments in Figure~\ref{fig:chainwalk-adapt}, we set $\eta = 5 \times 10^{-7}$ and $\epsilon = 10^{-1}$. For the Cliff Walk (Control) experiments in Figure~\ref{fig:cliffwalk-adapt-control}, we set $\eta = 10^{-8}$ and $\epsilon = 10^{-1}$. For picking the learning rate for the I and D component, we also consider learning rates of the form $\min(0.25, N_t(X_t)/M)$ with $M \in \{\infty, 10, 100, 500, 1000, 10000\}$.

\begin{table}[ht]
    \centering
    \begin{tabular}{|c|c|}
        \hline
         & Learning Rates $\min(\epsilon, N_t(X_t)/M)$ searched through \\
         & (formatted $\epsilon: \text{corresponding set of } M$)\\
        \hline
        P Component & 
        \begin{tabular}{@{}c@{}}
            1: $\{10, 50, 100, 500, 1000, 10000\}$ \\
            0.75: $\{10, 50, 100, 500, 1000\}$ \\
            0.5: $\{10, 50, 100, 500, 1000\}$ \\
            0.25: $\{10, 50, 100\}$ \\
            0.1: $\{10, 50, 100\}$ \\
            0.01: $\{10000\}$ \\
            0.001: $\{10000\}$ \\
            0.0001: $\{10000\}$
        \end{tabular} \\
        \hline
        I Component & 
        \begin{tabular}{@{}c@{}}
            1: $\{\infty, 100\}$ \\
            0.5: $\{\infty\}$ \\
            0.1: $\{\infty\}$ \\
            0: $\{\infty\}$
        \end{tabular} \\
        \hline
        D Component & 
        \begin{tabular}{@{}c@{}}
            1: $\{\infty, 100\}$ \\
            0.5: $\{\infty\}$ \\
            0.25: $\{\infty\}$ \\
            0.1: $\{\infty\}$ \\
            0.01: $\{\infty\}$ \\
            0: $\{\infty\}$
        \end{tabular} \\
        \hline
    \end{tabular}
    \caption{All the learning rates searched through in the Garnet experiments.}
    \label{tab:learning-rates}
\end{table}

\end{document}